\documentclass{article}
\pdfoutput=1
\usepackage{microtype}
\usepackage{graphicx}
\graphicspath{{experiments-release/customer_reviews_exp/plots/}}
\usepackage{subcaption}
\usepackage{booktabs} 

\usepackage{hyperref}
\usepackage{bbm}

\usepackage[preprint]{icml2026}
\date{}
\usepackage{amsmath}
\usepackage{amssymb}
\usepackage{mathtools}
\usepackage{amsthm}
\usepackage{extarrows}

\usepackage[capitalize,noabbrev]{cleveref}

\let\oldnewtheorem\newtheorem
\RenewDocumentCommand{\newtheorem}{m o m
  o r<>}{\IfValueTF{#2}{\oldnewtheorem{#1}[#2]{#3} \AddToHook{env/#1/begin}{\crefalias{#2}{#1}} }{\IfValueTF{#4}{\oldnewtheorem{#1}{#3}[#4] }{\oldnewtheorem{#1}{#3}}} \crefname{#1}{#3}{#5} }

\theoremstyle{plain}
\newtheorem{theorem}{Theorem}[section]<Theorems>
\newtheorem{lemma}[theorem]{Lemma}<Lemmas>
\newtheorem{observation}[theorem]{Observation}<Observations>
<Claims>
<Conditions>
<Examples>
<Facts>
\newtheorem{corollary}[theorem]{Corollary}<Corollaries>
\newtheorem{proposition}[theorem]{Proposition}<Propositions>
\theoremstyle{definition}
\newtheorem{definition}[theorem]{Definition}<Definitions>
\newtheorem{remark}[theorem]{Remark}<Remarks>
<Problems>
<Questions>
<Exercises>
\newtheorem{assumption}[theorem]{Assumption}<Assumptions>

\crefname{equation}{eq.}{eqs.}

\usepackage[textsize=tiny]{todonotes}

\newcommand{\detailcite}[2]{#1 in \citet{#2}}

\RequirePackage{etoolbox}

\DeclareMathOperator{\set}{set}
\DeclareMathOperator{\supp}{supp}

\newcommand{\world}{document distribution}
\newcommand{\worlds}{\world{}s}

\newcommand{\dworld}{\ensuremath{D_{\mathrm{world}}}}

\newcommand{\Mis}{\textup{Mis}}

\DeclarePairedDelimiter{\inp}{(}{)}
\DeclarePairedDelimiter{\insq}{[}{]}
\DeclarePairedDelimiter{\abs}{\vert}{\vert}

\newcommand{\mparagraph}[1]{

\noindent\textbf{#1}\ \ }

 \icmltitlerunning{Innovation: An Almost Characterization of Hallucination}

\begin{document}

\twocolumn[
	\icmltitle{Innovation: An Almost Characterization of Hallucination}

\icmlsetsymbol{equal}{*}

	\begin{icmlauthorlist}
		\icmlauthor{Nishant P. Das}{comp}
		\icmlauthor{Piyush Srivastava}{comp}
	\end{icmlauthorlist}

	\icmlaffiliation{comp}{School of Technology and Computer Science, Tata
          Institute of Fundamental Research, Mumbai, Maharashtra - 400 005, India}

	\icmlcorrespondingauthor{Nishant Das}{nishant.das@tifr.res.in}
	\icmlcorrespondingauthor{Piyush Srivastava}{piyush.srivastava@tifr.res.in}

\icmlkeywords{Hallucinations, Probabilistic modeling}

	\vskip 0.3in
]

\printAffiliationsAndNotice{}  

\begin{abstract}

Hallucination is a central limitation of large language models (LLMs), and
substantial effort has been devoted to understanding and mitigating it. Towards
this, Kalai and Vempala (STOC 2024) introduced a probabilistic framework
formalizing calibration and hallucination, and showed that, with high
probability, calibrated LLMs hallucinate roughly at the rate of the ``missing
mass'', a measure of how incomplete the training data is relative to its
source. This raises two fundamental questions: (i) what property of a calibrated
LLM makes hallucinations unavoidable? and (ii) can hallucinations be avoided by
giving up calibration? We answer these questions by introducing a simpler
property we call \emph{innovation} that measures the tendency of a model to
produce outputs outside the training data. We show that innovation is implied by
the condition for hallucination identified by Kalai and Vempala, and, further,
that it is an almost characterization of hallucination: hallucination implies
innovation, and conversely, innovation implies hallucination with high
probability.  We also provide lower bounds on the hallucination rate based on
the ``innovation rate'', and by relating innovation rate back to missing mass,
we obtain new hallucination rate lower bounds based on missing mass that extend
the results of Kalai and Vempala.

 \end{abstract}

\section{Introduction}

One of the principal limitations of Large Language Models (LLMs) is the
phenomenon of \emph{hallucination}~\cite{huang2025survey}: the model produces
statements that sound plausible but that are factually incorrect or logically
inconsistent. This behavior persists even in state-of-the-art systems and thus
remains a central challenge in the theory and practice of machine learning.
Substantial work has therefore been devoted to mitigating the effect of
hallucinations in practice, and we discuss some of this later in the paper.

Another line of work, however, has sought to understand hallucination abstractly
as a statistical phenomenon.  The approach in this line of work is to abstract
away the details of the model architecture, and to view LLMs as elements of a
general class of (possibly randomized) procedures.  To be useful, such an
abstract framework must be rich enough to model hallucination and preferably
other statistical properties of LLMs as well.  One can then hope to establish
general relationships between such statistical properties.  The advantage of
such abstract frameworks is that if the framework is also realistic enough, then
the relationships obtained can be expected to hold quite independently of the
underlying architecture or learning procedures, and could potentially apply also
to architectures and learning procedures that may be discovered in the future.

An important probabilistic framework for studying hallucination is due to
\citet{kalaivempala2024caliberatedllms}.  In their framework, the ``world'' is
modeled as a \emph{meta-distribution} \dworld{} over \worlds{}.  The ``true''
\world{} \(p^{*}\) is drawn from this meta distribution \dworld{}, and the
support of this ``true'' \world{} \(p^{*}\) represents the ``facts''.  A
\emph{corpus} is generated by sampling repeatedly from the ``true'' \world{}. A
language model (more precisely, the training process of a language model) is
abstracted as an algorithm that receives such a corpus and outputs a
distribution over statements.  A model is said to be \emph{calibrated} if its
output distribution approximately matches the ``true'' \world{} under a notion
of distance formalized by Kalai and Vempala (see \cref{def:miscalibration}
below). A key quantity in their framework is the
\emph{missing mass}, which is the probability mass, under the ``true'' \world{},
of the set of statements outside the corpus. Under some natural regularity
assumptions on the meta-distribution, the main result of Kalai and Vempala is
that ``\emph{calibrated language models must hallucinate},'' roughly at the rate
of the missing mass. This \emph{rate} result also implies an \emph{existence}
result: it shows that if a calibrated model faces sufficiently large missing
mass, hallucination is inevitable.

\mparagraph{Contributions}
The above results lead to a few natural questions: what is the simplest natural
property of an LLM that makes hallucination unavoidable?  For example, can a model
avoid hallucinating by not being calibrated?

We first give qualitative answers to these questions through an intermediate
property that we call \emph{innovation} (see \cref{def-innov} for a formal
definition).  In comparison to the sophisticated formalization of calibration
developed by \citet{kalaivempala2024caliberatedllms} (see
\cref{def:miscalibration} below), this is a very simple quantity: it just
measures the probability assigned by the model to statements not observed in the
training data.  Our main qualitative result is that this simple quantity,
however, lies at the heart of the hallucination phenomenon in the Kalai-Vempala
framework.  At an informal high level, under the same regularity assumption as
in the work of Kalai and Vempala, we refine their existence result
\begin{align*}
    &\text{Calibration + significant missing mass} \\
    \xLongrightarrow{\text{w.h.p.}}\;
    & \text{Hallucination}
\end{align*}
into a finer chain of implications:
\begin{equation}
  \label{eq-refined-chain}
  \begin{aligned}
    & \text{Calibration + significant missing mass} \\
\;\Longrightarrow\;                & \text{Innovation}
                           \;\xLongrightarrow{\text{w.h.p}}\;  \text{Hallucination}\\
                           \;\Longrightarrow\;  &               \text{Innovation}.
  \end{aligned}
\end{equation}
We re-emphasize two important points made in the above diagram.  First, our
simpler condition of innovation holds whenever the condition of calibration
combined with significant missing mass used by
\citet{kalaivempala2024caliberatedllms} holds: it is therefore weaker than
their condition.  Second, our condition essentially characterizes hallucination:
our results show that in the Kalai-Vempala framework with their regularity
assumptions, \emph{innovation and hallucination are two sides of the same coin}.
\cref{sec:existence} presents our main qualitative results, showing how
innovation is an almost characterization of hallucination.

We then turn, in \cref{sec:rates}, to a more quantitative exploration of the
notion of innovation: our contribution here concerns the \emph{rate} of
hallucination once innovation occurs. While Kalai and Vempala express the
hallucination rate in terms of the missing mass which is a property of the ``true'' \world{} and the training data, we derive two lower bounds: a \emph{Markov-style} bound
and a \emph{high-confidence} bound, in terms of the model's \emph{innovation
  rate} which is a property only of the model and the training data. Under their
baseline assumptions, Kalai and Vempala obtain hallucination rate lower bounds that
depend explicitly on the corpus size \(n\) and degrade as \(n\) grows. This
leaves open the possibility that hallucination might be eliminated given a
sufficiently large corpus. They remove this dependence only by imposing an
additional regularity assumption (\cref{assum-regular-prob} described below). In
contrast, \emph{we remove the dependence on the corpus size \(n\) without this additional
assumption}. Our results show that the innovation rate fundamentally governs the
hallucination rate and that the possibility suggested above is untrue:
increasing the amount of training data alone cannot eliminate hallucination
once innovation occurs.

Finally, we show how to relate innovation rate back to missing mass. Combining
this relation with our innovation-based bounds yields new missing mass lower
bounds on hallucination. These results recover the qualitative conclusions of
Kalai and Vempala under weaker assumptions, and continue to hold in regimes
where their original bounds become vacuous.  

After a brief discussion of related work, we then develop in
\cref{sec:preliminaries} the elements of the Kalai-Vempala framework relevant to
our work. 

 \mparagraph{Related Work}
Hallucination has been extensively studied empirically: benchmarks such as
TruthfulQA attempt to quantify factual errors in LLM
outputs~\cite{lin2021truthfulqa}, and practical approaches for mitigation have
explored methods such as retrieval augmentation, grounding, and uncertainty
estimation~\cite{lewis2020rag,nakano2021webgpt,farquhar2024detect}.  We refer to
\citet{huang2025survey} for a survey of causes and mitigation strategies.

On the theoretical side, \citet{kleinberg2024generation} asked the following
question: given only positive examples from an unknown language, and no feedback
on errors, is it possible to \emph{eventually} generate \emph{new} strings that
all belong to the target language? While classical results in the theory of
language identification~\cite{GOLD1967447} show that \emph{identifying} the
unknown language is impossible under positive data alone, Kleinberg and
Mullainathan demonstrate that \emph{generation} is nonetheless possible.  This
framework has been extended by various subsequent works, e.g., by
\citet{charikarExploringFacetsLanguage2025} (who also consider modifications of
the model in which hallucination becomes inevitable),
\citet{ramanGenerationLensLearning2025},
\citet{raman2025generationnoisyexamples},
\citet{kleinberg2025densitymeasureslanguagegeneration}, and
\citet{languagegenerationtradeoffs}.  Computability theoretic formalizations of
hallucination have also been
analyzed~\cite{xu2024inevitable,suzuki2025negligible}.

Our work builds upon the more statistical formalization of hallucination by
\citet{kalaivempala2024caliberatedllms}, who showed that, in their framework,
calibration implies hallucination.  \citet{wuNoFreeLunch2025} develop a learning
theoretic framework in which they demonstrate the inevitability of hallucination
without requiring calibration as a precondition.
\citet{kalai2025languagemodelshallucinate} further generalize the original
Kalai-Vempala framework to take into account phenomena such as pretraining.  In
contrast to these works, the question we ask in this paper is: what is the
simplest condition that already implies hallucination in the original
Kalai-Vempala framework?  As discussed above, and justified formally in the rest
of this paper, our results show that the simple property of \emph{innovation}
provides an answer to this question.

 \section{The Kalai-Vempala Framework}
\label{sec:preliminaries}

In this section, we introduce the Kalai-Vempala framework. Let \(\Omega\) be a finite
set of \emph{statements}. In practice, \(\Omega\) could represent all possible finite
sequences of words from the vocabulary of the LLM of length up to the context
length of the
LLM. The Kalai-Vempala framework however works in the \emph{unprompted generation}
setting where the prompt is fixed to be the empty string (equivalently, one may
view the prompt as fixed throughout). In this setting, the training data
consists solely of responses, and during inference the model generates a
response without an explicit prompt.

\begin{definition}[Language Model]
	A \emph{language model} is a mapping
	\(
	\mathcal{A} : \mathcal{M}(\Omega) \to \Delta(\Omega),
	\)
	where \(\mathcal{M}(\Omega)\) is the set of finite multisets over \(\Omega\). Thus, given the
	training data \(X \in \mathcal{M}(\Omega)\), the model outputs a predictive distribution \(g := \mathcal{A}(X) \in \Delta(\Omega)\).
\end{definition}

The Kalai-Vempala framework captures LLMs in the following manner: \(\Omega\) is
the set of strings of length up to \(c\) (the \emph{context length} of the LLM)
over a finite set \(\mathcal{V}\) (the \emph{vocabulary} of the LLM).  As mentioned above,
the Kalai-Vempala framework works in the unprompted generation setting.  The autoregressive generating procedure of LLMs then induces a well-defined
probability distribution over \(\Omega\). Hence, LLMs fit into the above abstract
notion of Language Models.

We now motivate and formalize the notions of \emph{\worlds} and \emph{corpuses}
in the framework of Kalai and Vempala.  Large language models are trained on
large-scale datasets collected from a variety of sources such as Wikipedia,
Reddit, news articles, books, blogs, and scientific papers. Different sources
contain different statements, and the same statement may appear with different
frequencies across sources; e.g., technical statements may be more common in
scientific articles than in Reddit, while personal statements may appear more
frequently in the latter.

Each such source (or any fixed combination of sources) defines a
\emph{\world}: this is a probability distribution over the set \(\Omega\) of all statements, and represents which statements appear in the source and how
often they occur.
The support of a \world{} defines
the set of \emph{facts} in the \world{}, while the complement of the support is
the set of \emph{hallucinations}.   This means that in the Kalai-Vempala framework, there is no \emph{semantic}
notion of truth: a fact is defined simply by belonging to the support of the
\world{}.

A \emph{corpus} is generated by repeatedly
sampling from the \world{}.
Given a corpus from a particular \world{}, a language model trained on it is said
to \emph{hallucinate} (according to that \world{}) if it places positive
probability mass on the set of \emph{hallucinations} of that \world{}.  For
instance, for a model trained on a corpus sampled from Wikipedia, all statements that occur in Wikipedia are
taken to be factual within the Wikipedia \world{}. Thus, in the Kalai-Vempala
framework, a model trained on a corpus drawn from Wikipedia is said to
hallucinate if and only if it produces statements that are absent in Wikipedia.

Note that while the corpus itself is observed, the underlying \world{} that
produced it is not. Indeed, many distinct \worlds{} may give rise to the same
observed corpus. As a result, from the corpus alone, one cannot comment about
how much the model hallucinates. To reason about hallucination in this setting,
Kalai and Vempala introduce a \emph{meta-distribution} \(\dworld\)
over \worlds{}, capturing uncertainty about which \world{} generated the
observed corpus. Once a distribution over \worlds{} is fixed, we can reason
probabilistically via the posterior over \worlds{} given the corpus and make
probabilistic statements about hallucination across \worlds{}.

\begin{remark}
  The unprompted generation setting and the assumption that all observed
  statements are true are both significant simplifications. Nevertheless, both
  our results and those of \citet{kalaivempala2024caliberatedllms} already hold
  under these assumptions. In more realistic settings, the conclusions would at
  the very least remain valid, and may in fact become stronger. For one
  approach to extending this framework to the prompted setting, see
  \citet{kalai2025languagemodelshallucinate}. We also note that LLMs exhibit
  additional forms of hallucination beyond the statistical notion studied here, 
  incorporating which would only increase the overall hallucination rate. Our
  goal, however, is to understand the simplest possible setting under which the
  probabilistic properties of a model, independent of its internal mechanics,
  imply hallucination.
\end{remark}

\mparagraph{Notation} We now review the formal notation, largely following
\citet{kalaivempala2024caliberatedllms}, for the above notions. As before,
\(\Omega\) denotes a finite (but large) set of statements. For any set \(S\), we
denote by \(\Delta(S)\) the set of probability distributions over \(S\). A
\emph{\world} is a probability distribution over \(\Omega\) (i.e. an element of
\(\Delta(\Omega)\)). We denote by \(\dworld \in \Delta(\Delta(\Omega))\) the distribution over the
\worlds{}. Each \world{} \(p\) in \(\supp(\dworld{})\) determines its set of
\emph{facts} \(F = F(p):=\supp(p)\) and its set of \emph{hallucinations}
\(H = H(p) :=\Omega \setminus F\). (The symbols \(F\) and \(H\) are chosen for consistency
with \cite{kalaivempala2024caliberatedllms}.)\footnote{As noted by
  \cite{kalaivempala2024caliberatedllms}, with some extra notation, one can
  further distinguish between documents and the facts that are contained in
  those documents. However, since the mathematical development of their
  framework depends only on notions of facts and probability weights assigned to
  them, we follow their lead and, without loss of generality, do not
  emphasize this distinction.} A corpus \(X\) is generated by first drawing \(p^*\) from \(\dworld{}\) and then
\(X = (x_1,\dots,x_n) \sim (p^*)^{\times n}\), where \(n\) is the size of the
corpus. Recall that \(p^*\) is unknown (to the model trainer).
\(O := \operatorname{set}(X)\) is the set of \emph{observed statements}
(differing from \(X\) only in de-duplication of statements that appear multiple
times in \(X\)) and \(U := \Omega \setminus O\) are the \emph{unobserved
	statements}. \(p(U)\) thus denotes the \emph{missing mass} for a \world{}
\(p\) and corpus \(X\).  A Language Model \(\mathcal{A}\) trained on a corpus
\(X\) outputs a predictive distribution
\(g = \mathcal{A}(X) \in \Delta(\Omega)\) (again, the symbol \(g\) is chosen for consistency with
\citet{kalaivempala2024caliberatedllms}). Hence, \(g\) represents a trained
language model.  Our central quantity of interest is \(g(H)\) which denotes the
\emph{rate of hallucination}.

\subsection{Regularity Assumptions}
We now present the regularity assumptions imposed on the meta-distribution
\(\dworld{}\) in the Kalai-Vempala framework.  The first of these captures the
idea that semantic truth is rare relative to the space of all well-formed
statements.  \citet{kalaivempala2024caliberatedllms} formalize this by assuming
that each \world{} \(p \sim \dworld{}\) contains only finitely many facts, and that
number is small relative to the full statement space \(\Omega\).
\begin{assumption}[\(K\)-sparsity: \detailcite{Assumption 1}{kalaivempala2024caliberatedllms}]
	\label{assum-k-sparse}
	There exists \(K\) such that every \world{} \(p \in \supp(\dworld{})\)
	satisfies \(|F(p)| \le K\), with \(K/\abs{\Omega} \ll 1\).
\end{assumption}
We note that \citet{kalaivempala2024caliberatedllms} use a slightly different
but equivalent parameterization: they say that the world is \(s\)-sparse if
\(\abs{F(p)} \leq \exp\inp{-s}{\abs{H(p)}}\) for every \(p\) in its support.
The two parameterizations are equivalent via
\(K = \abs{\Omega}/(1 + \exp\inp{s})\).  Further, this assumption also implies that
\(K/|U| \le e^{-s}\) for every corpus generated from a \world{} sampled from
\dworld{}.

The second regularity assumption in the Kalai-Vempala framework formalizes the
idea that outside the observed corpus the model has no reliable signal to
distinguish facts (``true'' statements) from hallucinations (``false''
statements).  This goes back to the idea that in this framework, there is no
\emph{semantic} notion of truth.

\begin{assumption}[{\detailcite{Definition 3: Regular Facts}{kalaivempala2024caliberatedllms}}]
	\label{assum-regular-facts}
	For every corpus \(X\) and posterior \(\nu := \dworld{}(\cdot \mid X)\), all
	unobserved statements \(y,y' \in U\) satisfy
	\[
		\Pr_{p \sim \dworld{}}[y \in F \;\mid X] = \Pr_{p \sim \dworld{}}[y' \in F \;\mid X],
	\]
	where \(F\) denotes \(\supp(p)\).  Thus, conditioned on the observed
	data, all unseen statements are equally likely to be factual in the true
	\world{}.
\end{assumption}

While our results do not rely on the following assumption, for completeness and to facilitate discussions, we describe an additional assumption under which Kalai and Vempala were able to strengthen their results.

\begin{assumption}[{\detailcite{Definition 4: Regular Probabilities}{kalaivempala2024caliberatedllms}}]
	\label{assum-regular-prob}
	For every corpus \(X\) and posterior \(\nu := \dworld{}(\cdot \mid X)\), all
	unobserved statements \(y,y' \in U\) satisfy
	\[
		\mathbb{E}_{p \sim \dworld{}}[p(y) \,\mid\, X] = \mathbb{E}_{p \sim \dworld{}}[p(y') \,|\, X].
	\]
\end{assumption}

\mparagraph{(Mis)calibration} We now describe the final ingredient of the
Kalai-Vempala framework: a formalization of the notion of calibration.  A
\emph{partition} of \(\Omega\) is a collection \(\Pi = \{B_1,\dots,B_m\}\) of disjoint
nonempty subsets of \(\Omega\) whose union is \(\Omega\). Each element of a partition is
called a \emph{cell}. We write \(\mathcal{P}(\Omega)\) for the set of all such partitions.
\begin{definition}[{\detailcite{Definition 1: Calibration and Coarsening}{kalaivempala2024caliberatedllms}}] \label{def:coarsening}
	Let \(p \in \Delta(\Omega)\) and \(\Pi \in \mathcal{P}(\Omega)\). Let \(B_y\) denote the unique cell containing the statement \(y \in \Omega\).
	The \emph{\(\Pi\)-coarsening} of \(p\) is the distribution \(p^\Pi \in \Delta(\Omega)\)
	defined by
	\(
	p^\Pi(y) := \frac{p(B_y)}{|B_y|}.
	\)
	Thus \(p^\Pi\) averages \(p\) within each cell of \(\Pi\) and redistributes
	that mass uniformly inside the cell.  Given \(p,g \in \Delta(\Omega)\), we say that
	\(g\) is \emph{calibrated} to \(p\) if there exists
	\(\Pi \in \mathcal{P}(\Omega)\) such that
	\(
	g = p^\Pi.
	\)
\end{definition}
Notice that \(g\) is calibrated to \(p\) \emph{if and only if} \(g\) is equal to
the coarsening of \(p\) induced by the partition induced by the level sets of
\(g\). This leads to a natural definition for Miscalibration.

\begin{definition}[{Miscalibration: \detailcite{eq.~(2)}{kalaivempala2024caliberatedllms}}]
	\label{def:miscalibration}
	Given \(g \in \Delta(\Omega)\), let \(\mathcal{B}_g\) be the partition of
	\(\Omega\) into level sets of \(g\):
	\( \mathcal{B}_g := \{ B \subseteq \Omega : \exists \alpha \text{ with } B = \{y \in \Omega : g(y) = \alpha\},\; B \neq \emptyset \}.
	\) The \emph{miscalibration} of \(g\) relative to \(p\) is defined as
	\[
		\Mis(g,p)
		:=
		\bigl\| g - p^{\mathcal{B}_g} \bigr\|_{\mathrm{TV}} = \max_{S \subseteq \Omega}|g(S) - p^{\mathcal{B}_g}(S)|.
	\]
\end{definition}
Hence, \(\Mis(g,p) = 0\) if and only if \(g\) is calibrated to \(p\).

\mparagraph{Calibration Implies Hallucination} We are now ready to state the main
results of Kalai and Vempala.  The main result in their paper is their Theorem
1; however, this is a technical result that is not directly interpretable.
Kalai and Vempala therefore present more interpretable consequences of this main
theorem as corollaries, which we state below.

The first of these corollaries (Corollary 2 in their paper) can be directly
compared to our results to be stated later, since it operates under the same
assumptions on the meta distribution as our results.\footnote{Note that the
	statements of Corollaries 1 and 2 of \citet{kalaivempala2024caliberatedllms}
	have an additional error term since those bounds are stated in terms of an
	estimator for \(p(U)\) rather than in terms of \(p(U)\) directly.  However,
	the bounds without that error term presented here are implied by their
	proof.}

\begin{corollary}[{\detailcite{Corollary 2}{kalaivempala2024caliberatedllms}}]
	\label{cor:kv-cor2}
	Let \(\dworld{}\) be a \(K\)-Sparse (Assumption \ref{assum-k-sparse})
	world satisfying Regular Facts (\cref{assum-regular-facts}).
Let \(p^* \sim \dworld{}\) be the true \world{},
	\(X \sim (p^*)^{\times n}\) the training data, and
	\(g = \mathcal{A}(X)\) be the predictive distribution of a language model
	\(\mathcal{A}\). Then for any \(\delta \in (0,1]\), with probability at least
	\(1 - \delta\) conditioned on the input training corpus \(X\), we
	have \[
		g(H)
		\;\ge\;
		p(U)
		\;-\;
		\Mis\bigl(g,p\bigr)
		\;-\;
		\frac{K(n+1)}{\delta\,|U|}.
	\]
\end{corollary}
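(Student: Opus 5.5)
The plan is to decompose $g(H)$ through the unobserved set $U$, use miscalibration to pass from $g$ to the coarsening $p^{\mathcal{B}_g}$, and control the two resulting error terms in expectation over the posterior $\nu=\dworld{}(\cdot\mid X)$. A single Markov inequality then gives the high-probability statement. Throughout, $X$ (hence $O$, $U$ and $g=\mathcal{A}(X)$) is fixed, and only $p\sim\nu$ is random.

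First I extract what Regular Facts gives. By \cref{assum-regular-facts} there is a common value $q$ with $q=\Pr_{p\sim\nu}[y\in F]$ for every $y\in U$. Summing over $y\in U$ gives $q\,|U|=\mathbb{E}_\nu|F\cap U|\le K$ by \cref{assum-k-sparse}, so $q\le K/|U|$.

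Next, since $U\setminus F\subseteq H$, we have $g(H)\ge g(U)-g(U\cap F)$. By \cref{def:miscalibration}, $g(U)\ge p^{\mathcal{B}_g}(U)-\Mis(g,p)$. Expanding the coarsening cell by cell over $B\in\mathcal{B}_g$ and using $|B\cap U|=|B|-|B\cap O|$,
\[
p^{\mathcal{B}_g}(U)=\sum_{B}p(B)\frac{|B\cap U|}{|B|}\ \ge\ \sum_B p(B\cap U)\Bigl(1-\frac{|B\cap O|}{|B|}\Bigr)= p(U)-\sum_B p(B\cap U)\frac{|B\cap O|}{|B|}.
\]
Since $p(B\cap U)\le \mathbbm{1}[B\cap U\cap F\neq\emptyset]\le |B\cap U\cap F|$, combining the displays yields
\[
g(H)\ \ge\ p(U)-\Mis(g,p)-Z,\qquad Z:=g(U\cap F)+\sum_B \frac{|B\cap O|}{|B|}\,|B\cap U\cap F|.
\]
The random variable $Z\ge 0$ depends on $p$ only through the indicators $\mathbbm{1}[y\in F]$ for $y\in U$.

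Now I bound $\mathbb{E}_\nu Z$. Linearity and $\Pr[y\in F]=q$ for $y\in U$ give $\mathbb{E}\,g(U\cap F)=q\,g(U)\le q$. They also give
\[
\mathbb{E}\sum_B \frac{|B\cap O|}{|B|}\,|B\cap U\cap F|=q\sum_B |B\cap O|\,\frac{|B\cap U|}{|B|}\le q\,|O|\le q\,n.
\]
Hence $\mathbb{E}_\nu Z\le q(n+1)\le K(n+1)/|U|$, and Markov's inequality gives $Z\le K(n+1)/(\delta|U|)$ with posterior probability at least $1-\delta$. On that event the claimed inequality holds.

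The main obstacle is the middle step: relating $p^{\mathcal{B}_g}(U)$ to the true missing mass $p(U)$. Cells of $\mathcal{B}_g$ can mix observed and unobserved statements, and coarsening can then move mass from $U$ onto $O$. The key idea is that such leakage from a cell requires an unobserved fact in that cell, so it can be charged to $|B\cap U\cap F|$. Regular Facts makes this quantity small in expectation, and the weights $|B\cap O|/|B|$ sum to at most $n$; this is where the factor $n$ in the error term comes from. I would double-check the crude bound $p(B\cap U)\le |B\cap U\cap F|$, which uses $p\le 1$ pointwise. I would also check that the measurability and conditioning are handled correctly, since $g$ is fixed given $X$ and only $F\cap U$ is random.
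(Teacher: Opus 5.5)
Your proof is correct, and every step checks. The inequalities $g(H)\ge g(U)-g(U\cap F)$, $g(U)\ge p^{\mathcal{B}_g}(U)-\Mis(g,p)$, the cellwise leakage bound and the charge $p(B\cap U)\le|B\cap U\cap F|$ all hold pointwise in $p$. Since $g$, $O$, $U$ and $\mathcal{B}_g$ are fixed by $X$, Regular Facts and linearity give $\mathbb{E}[Z\mid X]\le q\,(g(U)+|O|)\le K(n+1)/|U|$, and Markov finishes.

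The paper gives no proof of this corollary. It quotes Kalai and Vempala and asserts that the form stated here, without their estimator error term, follows from their proof, so there is no argument to match yours against line by line. The closest argument in the paper is \cref{lem:coarsening-missing-mass}. It faces the same obstacle you identified: cells that mix $O$ and $U$ leak missing mass onto $O$. The paper handles this deterministically via $|O\cap B|\le K$, losing a multiplicative $1/(K+1)$ on $p(U)$ but with no $n$ and no failure probability. Your route keeps coefficient $1$ on $p(U)$, at the price of an additive error controlled only in posterior expectation. This is exactly the trade-off the paper draws between \cref{cor:kv-cor2} and \cref{cor:markov-missing}.

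Your decomposition also reproduces the constant in \cref{cor:kv-cor1}. Under Regular Probabilities, replace the crude charge by $\mathbb{E}[p(y)\mid X]\le 1/|U|$ for $y\in U$, and use $|O|\le K$, which holds because every $p$ in the posterior support has $O\subseteq F$. The leakage term then contributes at most $K/|U|$ in expectation, giving $2K/(\delta|U|)$. The same fact $|O|\le K$ sharpens your own bound: $q|O|\le q\min\{n,K\}$, so the error term can be taken to be $K(\min\{n,K\}+1)/(\delta|U|)$.
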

Note the dependence on \(n\), the size of the corpus: the bound becomes vacuous
once \(n\) becomes greater than \(|U|/K -1\).  This might lead one to hope that
hallucinations can perhaps be avoided by collecting a large corpus. By assuming
the Regular Probabilities assumption (\cref{assum-regular-prob} cited above),
Kalai and Vempala arrive at the following corollary which removes the dependence
on \(n\).
\begin{corollary}[{\detailcite{Corollary
					1}{kalaivempala2024caliberatedllms}}] \label{cor:kv-cor1} Let
	\(\dworld{}\) be a \(K\)-Sparse (Assumption \ref{assum-k-sparse}) distribution
	with Regular Facts (Assumption ~\ref{assum-regular-facts}) and Regular
	Probabilities (Assumption ~\ref{assum-regular-prob}).  Let
	\(p^* \sim \dworld{}\) be the true \world, \(X \sim (p^*)^{\times n}\) the training data,
	and \(g = \mathcal{A}(X)\) be the predictive distribution of a language model
	\(\mathcal{A}\). Then for any \(\delta \in (0,1]\), with probability at least
	\(1 - \delta\) conditioned on the input training corpus \(X\), we
	have \[
		g(H)
		\;\ge\;
		p(U)
		\;-\;
		\Mis\bigl(g,p\bigr)
		\;-\;
		\frac{2K}{\delta\,|U|}.
	\]
\end{corollary}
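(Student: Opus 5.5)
The plan is to condition on the corpus \(X\) throughout. Once \(X\) is fixed, \(O\), \(U\), \(g=\mathcal{A}(X)\) and the level-set partition \(\mathcal{B}_g\) are all deterministic, and the only randomness is \(p\sim\nu=\dworld{}(\cdot\mid X)\). I will write \(g(H)\) as a main term minus a single nonnegative error variable \(Z\), bound \(\mathbb{E}[Z\mid X]\le 2K/|U|\), and finish with one application of Markov's inequality.

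\textbf{Decomposition.} Every observed statement lies in the support of \(p\), so \(O\subseteq F\), hence \(H\subseteq U\) and
\[
g(H)=g(U)-g(U\cap F).
\]
By \cref{def:miscalibration}, with \(\mathcal{B}:=\mathcal{B}_g\), we have \(g(U)\ge p^{\mathcal{B}}(U)-\Mis(g,p)\). Next I compare \(p^{\mathcal{B}}(U)\) with \(p(U)\) cell by cell. For each cell \(B\in\mathcal{B}\),
\[
p(B)\frac{|B\cap U|}{|B|}\;\ge\; p(B\cap U)\frac{|B\cap U|}{|B|}\;=\;p(B\cap U)-p(B\cap U)\frac{|B\cap O|}{|B|}.
\]
Summing over cells gives \(p^{\mathcal{B}}(U)\ge p(U)-D\), where
\[
D:=\sum_{B\in\mathcal{B}} p(B\cap U)\,\frac{|B\cap O|}{|B|}\;\ge\;0.
\]
Combining, \(g(H)\ge p(U)-\Mis(g,p)-Z\) with \(Z:=D+g(U\cap F)\ge 0\).

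\textbf{Expectation bounds.} For the hallucination-support term, Regular Facts (\cref{assum-regular-facts}) says \(q:=\Pr[y\in F\mid X]\) is the same for every \(y\in U\). Summing over \(U\) gives \(q|U|=\mathbb{E}[|F\cap U|\mid X]\le K\) by \(K\)-sparsity. Therefore
\[
\mathbb{E}[g(U\cap F)\mid X]=\sum_{y\in U}g(y)\,q=q\,g(U)\le K/|U|.
\]
For \(D\), Regular Probabilities (\cref{assum-regular-prob}) says \(m:=\mathbb{E}[p(y)\mid X]\) is the same for every \(y\in U\). Since \(m|U|=\mathbb{E}[p(U)\mid X]\le 1\), we get \(m\le 1/|U|\). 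Then
\[
\mathbb{E}[D\mid X]=m\sum_{B}|B\cap U|\,\frac{|B\cap O|}{|B|}\le m\sum_B|B\cap O|=m|O|\le \frac{K}{|U|},
\]
using \(|O|\le|F|\le K\). Hence \(\mathbb{E}[Z\mid X]\le 2K/|U|\). Markov's inequality gives \(Z\le 2K/(\delta|U|)\) with conditional probability at least \(1-\delta\), which yields the stated bound.

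\textbf{Main obstacle.} The step needing care is the cell-wise comparison of \(p^{\mathcal{B}}(U)\) with \(p(U)\). Coarsening can move mass from observed statements onto unobserved ones, which is harmless for a lower bound. It can also move mass from unobserved statements onto observed ones, and that leakage, \(D\), is the term to control. This is exactly where Regular Probabilities is used: it makes the expected leakage scale with \(|O|/|U|\le K/|U|\) rather than with \(n\).

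A second point of care is to apply Markov once to the sum \(Z\) rather than separately to each term. Splitting \(\delta\) between two Markov bounds would give \(4K/(\delta|U|)\) instead of the stated constant \(2K/(\delta|U|)\).
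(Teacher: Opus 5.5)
Your proof is correct and complete. The paper does not prove this statement itself. It quotes it from Kalai and Vempala and remarks only that the version without the Good--Turing estimation term is implied by their proof, so there is no in-paper argument to match, and yours is a self-contained derivation.

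It is still useful to set it against the paper's own tools:
\begin{itemize}
\item Your bound \(\mathbb{E}[g(U\cap F)\mid X]=q\,g(U)\le K/|U|\) is exactly the Regular Facts step in the proof of \cref{thm:markov-bound}.
\item Your cell-wise inequality \(p^{\mathcal{B}}(U)\ge p(U)-D\) is the additive counterpart of \cref{lem:coarsening-missing-mass}. The paper bounds the leakage of \(U\)-mass onto observed statements in the worst case. That costs a multiplicative factor \(K+1\), but needs no Regular Probabilities and lets the partition depend on the unknown \(p\), as its corollaries allow.
\item You bound the leakage \(D\) only in expectation. This uses Regular Probabilities and requires the cells, and hence the weights \(|B\cap O|/|B|\), to be determined by \(X\). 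That holds for \(\mathcal{B}_g\), since \(g=\mathcal{A}(X)\). In exchange you keep the full missing mass \(p(U)\).
\item The same decomposition also explains \cref{cor:kv-cor2}. Without Regular Probabilities, use \(\mathbb{E}[p(y)\mid X]\le\Pr[y\in F\mid X]\le K/|U|\) for each \(y\in U\). This gives \(\mathbb{E}[D\mid X]\le |O|\,K/|U|\le nK/|U|\), and a single Markov step then yields the error term \(K(n+1)/(\delta|U|)\). So the leakage term \(D\) is the sole source of the \(n\)-dependence there.
\end{itemize}
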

\mparagraph{A Remark on Conditioning} Note that the probability bounds in the
above two results are with respect to the conditional distribution
\(\dworld{}(\cdot \;\mid X)\) of the \world{} \(p\), conditioned on the observed
training corpus \(X\).  Our results
(e.g. \cref{thm:innovation-implies-hallucination,thm:markov-bound,thm:sharp-bound}
and their corollaries) also derive probability bounds under the same conditional
distribution.

\mparagraph{Relaxing the Regular Facts Assumption}
\citet{kalaivempala2024caliberatedllms} also consider a weaker version of the
Regular Facts assumption (\cref{assum-regular-facts}), called the \(r\)-Regular
Facts condition, under which the posterior probability that an unseen statement
is factual is allowed to deviate from the posterior mean value
\(\mathbb{E}_\nu[|F \cap U|]/|U|\) by a factor of at most \(r\).  Both their results
as well as ours can be extended to use this weakened setting (of course, at a
cost depending upon the parameter \(r\)): we discuss this in
Appendix~\ref{app:r-regular-facts}.

The above two results can thus be summarized informally as follows. In a
\(K\)-sparse meta-distribution with Regular Facts, any language model that is
calibrated (\(\Mis(g,p)=0\)) must hallucinate with the hallucination \emph{rate}
roughly equal to the \emph{missing mass} \(p(U)\), up to an error term of order
\(Kn/|U|\). If we additionally assume that the meta-distribution has Regular
Probabilities, the error term collapses to \(O(K/|U|)\).

\mparagraph{Implications of the Kalai-Vempala Bounds}
These results (Corollary \ref{cor:kv-cor1} and Corollary \ref{cor:kv-cor2})
yield two distinct implications about the behavior of calibrated language
models.
\begin{itemize}
	\item \emph{Existence:} If the model is calibrated and the missing mass \(p(U)\) exceeds the error term (of order \(K/|U|\), or \(Kn/|U|\) in the absence of Regular Probabilities), then hallucination is unavoidable.
	\item \emph{Rate:} Under calibration, the hallucination rate \(g(H)\)
	      is approximately the missing mass \(p(U)\).
\end{itemize}
This distinction between existence and rate will be important in what follows. In \cref{sec:existence}, we strengthen upon the existence result and \emph{almost characterize} the existence of hallucination. In \cref{sec:rates}, we derive new lower bounds on the hallucination rate which complement the Kalai-Vempala bounds.

 \section{Innovation and Hallucination}
\label{sec:existence}
We now describe the main conceptual contribution of this paper: the notion of
\emph{innovation}.  Informally, a model innovates if it assigns nonzero
probability to events or outcomes that it has never directly observed, i.e., it
``extends'' beyond its training support.

\begin{definition}[\textbf{Innovation}]\label{def-innov}
	Let \(\mathcal{A}\) be a language model and \(g := \mathcal{A}(X)\) its
	predictive distribution over \(\Omega\) given input \(X \in \mathcal{M}(\Omega)\). Let
	\(U := \Omega \setminus \mathrm{set}(X)\) denote the set of unseen outcomes. We
	say that \(\mathcal{A}\) \emph{innovates} on \(X\) if
	\[
		g(U) = \mathcal{A}(X)(U) > 0.
	\]
	That is, the model assigns nonzero probability mass to outcomes not present
	in the corpus.
\end{definition}

In this section, we focus on establishing the following qualitative results:
\begin{enumerate}
\item We prove a two-way connection between innovation and hallucination. It is
  easy to see that every hallucinating model necessarily innovates; we show in
  the other direction that every model that innovates must hallucinate with high
  probability. Thus, innovation provides an almost characterization of the
  \emph{existence} of hallucination.

\item Kalai and Vempala showed that calibration together with sufficiently large
  missing mass forces hallucination. We show that calibration together with
  \emph{any} positive missing mass implies innovation. This shows that the
  existence-level implication of Kalai-Vempala is a special case of our
  innovation-based characterization.
\end{enumerate}

\subsection{Proof of Characterization}

We begin with the easy direction: hallucination is only possible if the model innovates.

\begin{observation}[Hallucination implies innovation]
	\label{prop:hall-to-inn}
	Let \(\mathcal{A}\) be a Language Model and \(g := \mathcal{A}(X)\) its predictive distribution over \(\Omega\) given input \(X \in \mathcal{M}(\Omega)\). If \(g(H) > 0\) for some world \(p\), then \(g(U) > 0\).
\end{observation}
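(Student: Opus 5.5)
The argument is a containment of sets followed by monotonicity of the probability measure \(g\). The key fact I will establish is \(H \subseteq U\) whenever the world \(p\) could have generated the corpus \(X\); then \(g(H) \le g(U)\), so \(g(H) > 0\) forces \(g(U) > 0\).

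To show \(H \subseteq U\), I will use how the corpus arises in the Kalai-Vempala framework. Here \(X = (x_1,\dots,x_n) \sim p^{\times n}\), so with probability one each \(x_i\) lies in \(\supp(p) = F(p)\). Hence \(O = \operatorname{set}(X) \subseteq F\). Taking complements in \(\Omega\) gives \(H = \Omega \setminus F \subseteq \Omega \setminus O = U\). Monotonicity of \(g \in \Delta(\Omega)\) then yields \(0 < g(H) \le g(U)\), which is the claim.

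The only delicate point is the interpretation of ``for some world \(p\)''. The inclusion \(O \subseteq F(p)\) needs \(p\) to be consistent with the observed corpus, i.e.\ to have positive posterior probability under \(\dworld{}(\cdot \mid X)\), equivalently \(p(x_i) > 0\) for all \(i\). This is the situation the paper considers, because hallucination is always measured relative to a world that could have produced \(X\). I will state this consistency hypothesis explicitly at the start of the proof. For an arbitrary \(p\) with \(O \not\subseteq F(p)\), the statement can fail: take \(g\) supported on \(O \setminus F(p)\). Once consistency is assumed, no probabilistic estimates are needed, and the conclusion holds for every such \(p\) simultaneously, not just with high probability.
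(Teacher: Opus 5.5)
Your proof is correct and essentially the paper's: the paper also notes $O \subseteq F$ because every training statement is factual, deduces $H \subseteq U$, and concludes $g(H) \le g(U)$. Your explicit consistency hypothesis ($p(x_i) > 0$ for all $i$) and your counterexample when it fails simply make precise an assumption the paper leaves implicit in the phrase ``every statement in the training data is factual.''
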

The converse direction is subtler, because \(H\), the set of hallucinations,
itself is unknown. We show however that, under the \(K\)-Sparsity and Regular
Facts assumptions, if the model innovates, then with high posterior probability
it must hallucinate.
\begin{theorem}[Innovation implies hallucination with high probability]
	\label{thm:innovation-implies-hallucination}
	Let \(\dworld{}\) be a meta-distribution satisfying \(K\)-Sparsity
        (\cref{assum-k-sparse}) and Regular Facts
        (\cref{assum-regular-facts}). Let \(\mathcal{A}\) be a Language Model and
        \(g := \mathcal{A}(X)\) its predictive distribution over \(\Omega\) given input
        \(X \in \mathcal{M}(\Omega)\).  If \(g(U) > 0\) then
	\[
		\Pr_{p \sim \dworld{}}\bigl[\, g(H) > 0 \,\bigm| X \bigr]
		\;\ge\;
		1 - \frac{K}{|U|}.
	\]
\end{theorem}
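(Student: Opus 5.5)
Fix the corpus \(X\); \(g=\mathcal{A}(X)\) is then deterministic and \(U=\Omega\setminus\set(X)\) is fixed, so the only randomness is \(p\sim\nu:=\dworld{}(\cdot\mid X)\). Since \(g(U)>0\), we can pick one unseen statement \(y_0\in U\) with \(g(y_0)>0\). The plan is to lower bound the probability of \(g(H)>0\) by the probability of the smaller event \(y_0\in H\). If \(y_0\notin F(p)\), then \(g(H)\ge g(y_0)>0\). Hence
\[
\Pr_{p\sim\dworld{}}\bigl[g(H)>0 \bigm| X\bigr]\;\ge\;\Pr\bigl[y_0\notin F\bigm| X\bigr]\;=\;1-\Pr\bigl[y_0\in F\bigm| X\bigr].
\]
It therefore suffices to show \(\Pr[y_0\in F\mid X]\le K/|U|\).

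For that bound I use Regular Facts (\cref{assum-regular-facts}). It says the quantity \(q:=\Pr[y\in F\mid X]\) is the same for every \(y\in U\). Summing over \(U\) and using linearity of expectation gives
\[
|U|\,q\;=\;\sum_{y\in U}\Pr[y\in F\mid X]\;=\;\mathbb{E}_{\nu}\bigl[|F\cap U|\bigr].
\]
By \(K\)-Sparsity (\cref{assum-k-sparse}), every \(p\) in the support of \(\dworld{}\) has \(|F(p)|\le K\). So \(|F\cap U|\le K\) almost surely under the posterior, and \(q\le K/|U|\), which is what we needed.

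The only delicate point is checking that conditioning on \(X\) keeps us inside \(\supp(\dworld{})\), so that the sparsity bound still holds under \(\nu\). This is immediate because the posterior is absolutely continuous with respect to the prior. We also need \(U\neq\emptyset\), which follows from \(g(U)>0\), so dividing by \(|U|\) is legitimate. A small improvement is possible: since \(X\subseteq F\), one has \(|F\cap U|\le K-|O|\), but the stated bound does not need it.
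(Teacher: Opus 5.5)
Your proposal is correct and is essentially the paper's own proof. You fix an unseen \(y^\star\) with \(g(y^\star)>0\), lower-bound \(\Pr[g(H)>0\mid X]\) by \(\Pr[y^\star\notin F\mid X]\), and then combine Regular Facts with \(K\)-Sparsity to get \(|U|\,q=\mathbb{E}[|F\cap U|\mid X]\le K\).
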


\begin{proof}
	Since \(g(U)>0\), pick \(y^\star\in U\) with \(g(y^\star)>0\). For any \world{} \(p\), we have \(H = \Omega\setminus F\), so \(y^\star\in H\) if and only if \(y^\star\notin F\).
	By Regular Facts, all \(y\in U\) have the same probability \(q := \Pr[y\in F \mid X]\). Hence
	\begin{equation}
          \mathbb{E}\bigl[\,|F\cap U| \mid X \bigr]
          = \sum_{y\in U} \Pr[y\in F \mid X] 
          = |U|\, q.  \end{equation}
	By \(K\)-sparsity, \(|F|\le K\) for every \world{}, so \(|F\cap U|\le
        K\). Taking conditional expectations on both sides gives
        \(|U|\,q \le K\), so that \(q \le K/|U|\). Applying this to \(y^\star\),
	\[
		\Pr[y^\star\in H \mid X]
		= 1 - \Pr[y^\star\in F\mid X]
		\ge 1 - \tfrac{K}{|U|}.
	\]
	Whenever \(y^\star\in H\), we have \(g(H)\ge g(y^\star)>0\). Therefore
	\[
		\Pr[g(H)>0 \mid X]
		\ge \Pr[y^\star\in H \mid X]
		\ge 1 - \tfrac{K}{|U|}. \qedhere
	\] 
\end{proof}

\cref{prop:hall-to-inn,thm:innovation-implies-hallucination} together show that,
in the Kalai-Vempala formalism, innovation is an almost characterization of
hallucination. To compare this with the results of Kalai and Vempala, we now
turn to the relationship between innovation and their notion of calibration.

\subsection{Calibration, Missing Mass, and Innovation}

Kalai and Vempala, at the existence level, show that if there is significant
missing mass, any calibrated model must hallucinate.  We now show that
calibration together with any missing mass already implies innovation.  The
intuition is straightforward: calibration forces the model to assign positive
probability to every factual statement.  As soon as there is missing mass, at
least one such statement lies outside the observed corpus, and the model must
assign it positive probability. Hence the model necessarily innovates. We now
formalize this.

\begin{proposition}[Calibration and missing mass imply innovation]
	\label{prop:calib-innov}
	Let $p \in \Delta(\Omega)$ be the true \world{}, $X \sim p^{\times n}$ the training data,
        and $U := \Omega \setminus O$ the set of unobserved statements, where
        $O = \mathrm{set}(X)$.  Let \(\mathcal{A}\) be a language model and
        $g:=\mathcal{A}(X)$ be a predictive distribution that is calibrated to
        $p$, i.e., there exists a partition $\Pi$ of $\Omega$ with $g = p^\Pi$.  If there
        is any missing mass, i.e., $p(U)>0$, then $\mathcal{A}$ innovates on $X$.
\end{proposition}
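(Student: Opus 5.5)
Since $p(U)>0$, we first fix a statement $y^\star\in U$ with $p(y^\star)>0$; such a $y^\star$ exists because $p(U)=\sum_{y\in U}p(y)$ is a finite sum of nonnegative terms. Our goal is to show $g(y^\star)>0$. This suffices, because then $g(U)\ge g(y^\star)>0$, which is exactly innovation on $X$ in the sense of \cref{def-innov}.

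To get $g(y^\star)>0$, we use calibration. By hypothesis $g=p^\Pi$ for some partition $\Pi$. Let $B_{y^\star}\in\Pi$ be the cell containing $y^\star$. By \cref{def:coarsening},
\[
g(y^\star)=p^\Pi(y^\star)=\frac{p(B_{y^\star})}{|B_{y^\star}|}\ge\frac{p(y^\star)}{|B_{y^\star}|}>0,
\]
since $p(B_{y^\star})\ge p(y^\star)$ by monotonicity and $1\le|B_{y^\star}|\le|\Omega|<\infty$. This completes the argument.

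There is no real obstacle. The one point to handle carefully is passing from positive missing mass to a single unobserved statement of positive $p$-mass. That step uses only the finiteness of $\Omega$; it does not need the fact that $X\sim p^{\times n}$. Note also that calibration gives a stronger conclusion: $\supp(p)\subseteq\supp(g)$, so $g$ puts positive mass on every factual statement, observed or not.
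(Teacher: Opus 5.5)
Your proof is correct and is essentially the paper's argument line for line. You pick an unobserved $y^\star$ with $p(y^\star)>0$, apply the coarsening formula on its cell to get $g(y^\star)=p(B_{y^\star})/|B_{y^\star}|>0$, and conclude $g(U)\ge g(y^\star)>0$.
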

Combining \cref{prop:calib-innov} with our earlier result that innovation forces
hallucination with high posterior probability under \(K\)-Sparsity and Regular
Facts (\cref{thm:innovation-implies-hallucination}), we obtain the refined
logical chain described in \cref{eq-refined-chain} in the introduction.
This shows that, at the level of existence, our result strictly generalizes the
Kalai-Vempala result.

A natural next question is how large the hallucination rate \(g(H)\) must be
once innovation occurs.  In the next section we show that the \emph{innovation
  rate} \(g(U)\) governs the \emph{rate} at which a model that innovates must
hallucinate: a fixed fraction of \(g(U)\) must fall on false statements with
high probability.

 \section{Lower Bounds on Hallucination Rate via Innovation Rate}
\label{sec:rates}
Having characterized the existence of hallucination, we now ask a quantitative
question: \emph{how much} must a model hallucinate once it innovates? In this
section we derive lower bounds on the hallucination rate \(g(H)\) in terms of
the \emph{innovation rate} \(g(U)\). Then, we relate innovation rate to missing mass to obtain bounds that complement those of Kalai and Vempala.

\subsection{A Markov-style Lower Bound on the Hallucination Rate}

We now derive a lower bound on \(g(H)\) in terms of the innovation
mass \(g(U)\) using a Markov-style argument.

\begin{theorem}[Markov-style bound]
	\label{thm:markov-bound}
	Let \(\dworld{}\) be a meta-distribution satisfying \(K\)-Sparsity and
        Regular Facts. Let \(\mathcal{A}\) be a Language Model and
        \(g := \mathcal{A}(X)\) its predictive distribution over \(\Omega\) given input
        \(X \in \mathcal{M}(\Omega)\). Then for any \(\delta \in (K/|U|, 1)\),
	\[
		\Pr_{p \sim \dworld{}}\insq*{\, g(H) \;\ge\; g(U)\,\left( 1 - \frac{K}{\delta |U|}\right) \,\Bigm|\, X}
		\;\ge\;
		1 - \delta.
	\]
\end{theorem}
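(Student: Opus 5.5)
Since \(X\) is conditioned on, \(g = \mathcal{A}(X)\) and \(U\) are fixed, and only the world \(p \sim \dworld{}(\cdot \mid X)\) is random. The plan is to apply Markov's inequality to the nonnegative random variable \(g(F \cap U)\), the innovation mass that happens to land on facts. Because \(H = \Omega \setminus F\), the set \(U\) splits as \((U \cap H) \cup (U \cap F)\). This gives
\[
  g(H) \;\ge\; g(U \cap H) \;=\; g(U) - g(U \cap F).
\]
So it suffices to show that, with posterior probability at least \(1-\delta\), we have \(g(U \cap F) \le g(U)\, K/(\delta|U|)\).

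\textbf{Step 1 (expectation bound).} By linearity, \(\mathbb{E}[g(U\cap F)\mid X] = \sum_{y\in U} g(y)\Pr[y\in F\mid X]\). Regular Facts (\cref{assum-regular-facts}) makes \(\Pr[y\in F\mid X] = q\) the same for every \(y\in U\). The argument in the proof of \cref{thm:innovation-implies-hallucination} then gives \(q \le K/|U|\): we have \(|U|q = \mathbb{E}[|F\cap U|\mid X] \le K\) by \(K\)-sparsity (\cref{assum-k-sparse}). Hence
\[
  \mathbb{E}[g(U\cap F)\mid X] \;=\; q\, g(U) \;\le\; \frac{K}{|U|}\, g(U).
\]

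\textbf{Step 2 (Markov).} If \(g(U)=0\), the claimed event \(g(H)\ge 0\) holds trivially. Otherwise, Markov's inequality gives
\[
  \Pr\!\left[g(U\cap F) > \frac{K\,g(U)}{\delta|U|}\,\Bigm|\, X\right] \;\le\; \frac{\mathbb{E}[g(U\cap F)\mid X]\,\delta|U|}{K\,g(U)} \;\le\; \delta.
\]
On the complementary event we get \(g(H)\ge g(U) - g(U\cap F) \ge g(U)\bigl(1 - K/(\delta|U|)\bigr)\), which is the claim.

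The restriction \(\delta > K/|U|\) only serves to make the bound non-vacuous, since it keeps the factor \(1 - K/(\delta|U|)\) positive. The one point needing care is Step 1: Regular Facts is what lets the \(g\)-weighted sum collapse to \(q\,g(U)\) regardless of how \(g\) distributes its mass over \(U\). Without it, \(g\) could concentrate on statements that are likely to be facts. Everything else is routine.
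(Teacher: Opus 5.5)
Your proof is correct and is essentially the paper's argument. The paper proves the same expectation bound \(\mathbb{E}[g(H)\mid X]\ge g(U)(1-K/|U|)\), then runs a reverse-Markov split on the event \(g(H)\ge t\,g(U)\) using \(g(H)\le g(U)\) with \(t=1-K/(\delta|U|)\); this is exactly Markov's inequality applied to \(g(U)-g(H)=g(F\cap U)\). Your direct version avoids the auxiliary parameter \(t\), and writing \(g(H)\ge g(U\cap H)\) means you do not need the inclusion \(H\subseteq U\) that the paper uses.
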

\begin{proof}
  We begin the proof by lower bounding the expected hallucination rate. Write
  \(g(F\cap U)=\sum_{y\in U} g(y)\mathbf{1}[y\in F]\).  By the Regular Facts assumption,
  there exists a \(q\) such that \(q = \Pr[y\in F\mid X]\) for every
  \(y \in U\).  As in the proof of \cref{thm:innovation-implies-hallucination},
  \(K\)-Sparsity implies \(q\le K/|U|\).  Taking conditional expectations (and
  noting that \(U\) and \(g(y)\) are completely determined by \(X\), i.e., they
  are measurable with respect to the \(\sigma\)-field generated by the random
  variable \(X\)), we thus get
\(\mathbb{E}[g(F\cap U)\mid X] = \sum_{y\in U} g(y)\Pr[y\in F\mid X] = \Bigl(\sum_{y\in U} g(y)\Bigr) \cdot q = q g(U) \leq g(U)\,K/|U|. \) Since \(g(H)=g(U)-g(F\cap U)\),
  \begin{equation}
    \label{eq:1}
    \mathbb{E}[g(H)\mid X] \ge g(U)\bigl(1-K/|U|\bigr).
  \end{equation}
  Let \(t\in(0,1)\) be a parameter to be chosen later. We now invoke a
  Markov-style argument. First, we split the expected hallucination rate
  \( \mathbb{E}[g(H)\mid X]\) into two terms
  \( \mathbb{E}[g(H) \mathbf{1}[g(H) \ge t g(U)]\mid X]\) and
  \( \mathbb{E}[g(H) \mathbf{1}[g(H) < t g(U)]\mid X] \). Because
  \(g(H)\le g(U)\), the first term is upper bounded by
  \(g(U)\mathbb{E}[\mathbf{1}[g(H) \ge t g(U)]\mid X]\) and the second term is upper
  bounded by \( t g(U) \mathbb{E}[\mathbf{1}[g(H) < t g(U)]\mid X]\).  Hence, we
  have
  \begin{align*}
    \mathbb{E}[g(H)\mid X]
    \le & \alpha\,g(U) + (1-\alpha)\,t\,g(U) \\
    = & g(U)\,(t + (1-t)\alpha),
  \end{align*}
  where \(\alpha := \Pr[g(H)\ge t\,g(U)\mid X]\).  Combining with the lower bound on
  \(\mathbb{E}[g(H)\mid X]\) in \cref{eq:1} gives
\(\alpha
\;\ge\; \frac{1 - K/|U| - t}{1-t}.
\)  Substituting \(t = 1 - \displaystyle \frac{K}{\delta |U|}\) in the above and
  using \(K/|U| < \delta < 1\) now gives
  \[
    \Pr\insq*{g(H)\ge g(U)\cdot \inp*{1 - \displaystyle \frac{K}{\delta |U|}}\mid X} = \alpha \ge 1 -
    \delta. \qedhere
  \]
\end{proof}

\subsection{A High-Confidence bound on Hallucination Rate}

The Markov-style lower bound in Theorem~\ref{thm:markov-bound} becomes
vacuous as \(\delta\) approaches \(K/|U|\), since the multiplicative factor
\(1-K/(\delta |U|)\) then tends to zero.  Similarly, the Kalai-Vempala bounds in Corollary \ref{cor:kv-cor1} and  Corollary~\ref{cor:kv-cor2} become vacuous whenever
\(\delta < 2K/|U|\), because \(p(U) \le 1\) and \(\|g - p^\Pi\|_{\mathrm{TV}} \ge 0\)
imply that the right-hand side of those bounds is non-positive. To go beyond this
limitation, we now present a different argument that yields a nontrivial lower
bound on hallucination rate with probability at least \(1 - K/|U|\).

The intuition is simple. Any model that innovates must inevitably assign
probability to hallucination. If it spreads its probability broadly, sparsity
defeats it: almost all unseen statements are hallucinations. If it concentrates
sharply, regularity defeats it: having no basis to distinguish the unseen truths
from falsehoods, the spike almost surely lands in the wrong place. Informally,
every model must either scatter or spike and both strategies fail. We now
formalize this.
\begin{theorem}[High-Confidence bound]
	\label{thm:sharp-bound}
	Let \(\dworld{}\) be a meta-distribution satisfying \(K\)-Sparsity and Regular Facts. Let \(g = \mathcal{A}(X)\) be any language model. Then
	\[
		\Pr_{p \sim \dworld{}}\Bigl[\, g(H) \;\ge\; \frac{g(U)}{K+1} \,\Bigm|\, X \Bigr]
		\;\ge\;
		1 - \frac{K}{|U|}.
	\]
\end{theorem}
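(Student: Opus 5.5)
Throughout, all probabilities are posterior probabilities conditioned on \(X\). Since \(U\) and \(g\) are determined by \(X\), the only randomness left is in \(F = \supp(p)\).

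\textbf{Plan.} The "scatter or spike" intuition suggests isolating the heaviest unseen statements. If \(g(U)=0\) the claim is trivial, so assume \(g(U)>0\). Order the unseen statements \(y_1, y_2, \dots, y_{|U|}\) so that \(g(y_1)\ge g(y_2)\ge \cdots\). Since \(|F|\le K\) by \(K\)-Sparsity, at most \(K\) unseen statements can be facts.

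\textbf{Key deterministic step.} Suppose \(y_1\in H\), i.e.\ the heaviest unseen statement is a hallucination. We want to show \(g(H)\ge g(U)/(K+1)\), and we split into two cases.
\begin{itemize}
\item \emph{Spike:} if \(g(y_1)\ge g(U)/(K+1)\), then \(g(H)\ge g(y_1)\) already suffices.
\item \emph{Scatter:} if every unseen statement has mass less than \(g(U)/(K+1)\), then \(g(F\cap U)<K\,g(U)/(K+1)\), because \(F\cap U\) has at most \(K\) elements. Since \(g(H)=g(U)-g(F\cap U)\), this gives \(g(H)>g(U)/(K+1)\).
\end{itemize}

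Actually this case split works for any fixed \(y_1\); the assumption \(y_1\in H\) is only needed in the spike case. A cleaner way to see it is to note that \(F\cap U\) has at most \(K\) elements, so
\[
g(F\cap U)\le g(y_1)+\cdots+g(y_K).
\]
If \(y_1\notin F\), then \(F\cap U\) is a set of at most \(K\) elements drawn from \(\{y_2,\dots,y_{|U|}\}\), so \(g(F\cap U)\le g(y_2)+\cdots+g(y_{K+1})\le \tfrac{K}{K+1}\sum_{i\le K+1} g(y_i)\). The last inequality holds because \(g(y_1)\) is at least the average of the first \(K+1\) masses. Therefore
\[
g(H)\ge g(U)-\tfrac{K}{K+1}\sum_{i\le K+1}g(y_i)\ge \tfrac{g(U)}{K+1}.
\]
This is the inequality I would write down.

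\textbf{Probabilistic step.} This is the same argument as in \cref{thm:innovation-implies-hallucination}. By Regular Facts and \(K\)-Sparsity, \(\Pr[y_1\in F\mid X]\le K/|U|\). Hence the event \(\{y_1\in H\}\), which by the deterministic step implies \(g(H)\ge g(U)/(K+1)\), has posterior probability at least \(1-K/|U|\).

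\textbf{Main obstacle.} The only delicate point is the deterministic inequality: choosing the maximizer \(y_1\) and bounding the mass of the remaining facts among \(U\setminus\{y_1\}\). The averaging argument over the top \(K+1\) masses handles it. If \(|U|<K+1\), the bound \(1-K/|U|\) is nonpositive and the statement is vacuous.
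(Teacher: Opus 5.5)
Your proof is correct and follows the paper's argument. You pick the heaviest unseen statement \(y^\star\), use Regular Facts with \(K\)-Sparsity to get \(\Pr[y^\star \in H \mid X] \ge 1 - K/|U|\), and show deterministically that \(g(H) \ge g(U)/(K+1)\) via the same spike/scatter split on \(m = \max_{y \in U} g(y)\); your averaging inequality over the top \(K+1\) masses just folds the two cases into the single implication \(y^\star \in H \Rightarrow g(H) \ge g(U)/(K+1)\), whereas the paper notes that the scatter case holds for every \(p\) and combines the cases with indicator variables.
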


\begin{proof}
  Let \( m := \max_{y \in U} g(y) \) denote the largest mass assigned by \(g\) to
  any unseen statement, and fix \(y^{\star} \in U\) satisfying
  \(g(y^{\star}) = m\).  Note that \(U, m\), \(y^{\star}\), and \(g(U)\) are random
  variables, but they are completely determined given the training corpus \(X\)
  (formally, they are measurable with respect to the \(\sigma\)-field generated by
  the random variable \(X\)).  Note also that applying the same line of
  reasoning as in the proof of
  Theorem~\ref{thm:innovation-implies-hallucination} on \(y^\star\) gives
  \(\Pr[y^\star \in H \mid X] \geq 1 - \frac{K}{|U|}.\) We now analyze two cases.\\
  \noindent\textbf{Case 1: \(m \le g(U)/(K+1)\).}
  In this case, using \(|F \cap U| \le K\) and \(K\)-Sparsity, we get
  \( g(F \cap U) = \sum_{y \in F \cap U} g(y) \;\le\; K \cdot m \;\le\; K \cdot \frac{g(U)}{K+1},\) so
  that
  \( g(H) = g(U) - g(F \cap U) \;\ge\; g(U) - K\cdot \frac{g(U)}{K+1} = \frac{g(U)}{K+1}.\\
  \) \noindent\textbf{Case 2: \(m > g(U)/(K+1)\).}  In this case
\( g(y^\star) > \frac{g(U)}{K+1},\) so that the event \(y^{*} \in H\) implies the
  event \(g(H) \geq g(U)/(K+1)\).

  Let \(\mathcal{E}\) denote the event \(g(H) \geq g(U)/(K+1)\), and let
  \(\mathcal{E}_1\) and \(\mathcal{E}_2\) denote the events corresponding to the two cases: note
  that \(\mathcal{E}_1\) and \(\mathcal{E}_2\) are determined by \(X\) (since
  \(m, U\) and \(g(U)\) are).  Writing the above conclusions in terms of
  indicator random variables thus gives
  \(\Pr[\mathcal{E}|X] \geq I_{\mathcal{E}_1} + I_{\mathcal{E}_2}\Pr[y^{\star} \in H|X]\).  Since
  \(\Pr[y^\star \in H \mid X] \geq 1 - \frac{K}{|U|}\), the claim follows by noting that
  \(I_{\mathcal{E}_{1}} + I_{\mathcal{E}_2} = 1\).
\end{proof}

We thus see that the only way to avoid hallucination entirely in this framework
is to choose a model that never innovates, i.e., a model with \(g(U) = 0\). Any
attempt to generalize beyond the observed support inevitably incurs
hallucination at a rate proportional to the model's innovation rate.

\mparagraph{The Regular Probabilities Assumption} We emphasize that both
Theorem~\ref{thm:markov-bound} and Theorem~\ref{thm:sharp-bound} rely only on
\(K\)-Sparsity and Regular Facts, and do \emph{not} assume Regular Probabilities
(\cref{assum-regular-prob}). In contrast, the Kalai--Vempala bound for
meta-distributions satisfying \(K\)-Sparsity and Regular Facts alone
(Corollary~\ref{cor:kv-cor2}) incurs an error term of order \(Kn/|U|\).  As the
corpus size grows (whenever \(n \ge |U|/K\)), this bound becomes vacuous, which
might suggest that with sufficiently large training data one could drive the
hallucination rate to zero under these assumptions.  Corollary 1 of
\citet{kalaivempala2024caliberatedllms} (\cref{cor:kv-cor1} above) shows that
this intuition is false, at least under an additional Regular Probabilities
assumption.  Our results show that this intuition is false even without this
assumption: even if the corpus size \(n\) is large, a model with a positive
innovation rate \(g(U)\) must hallucinate at a comparable rate.

\subsection{From Innovation Rate to Missing Mass}

We now show how to translate bounds expressed in terms of \emph{innovation rate}
\(g(U)\), which is a property of the model, into bounds expressed in terms of
the \world{}'s \emph{missing mass} \(p(U)\), which is a property of the \world{}
and the corpus \(X\).  This crucial translation also allows us to extend our
results in a way that complements the bounds of Kalai and Vempala.
\begin{proposition}[Missing mass lower-bounds innovation rate]
	\label{thm:gU-missingmass}
	Assume \(K\)-Sparsity. Let \(p \sim \dworld{}\),  \(X \sim p^{{\times n}}\), and let \(O := \set(X)\), \(U := \Omega \setminus O\).
	Let \(\Pi\) be a partition of \(\Omega\) and let \(p^\Pi\) denote the \(\Pi\)-coarsening
	of \(p\) as defined in Definition~\ref{def:coarsening}. Let \(\mathcal{A}\) be a language model and \(g := \mathcal{A}(X)\) be its predictive distribution. Then, we have
	\[
		g(U) \;\ge\; \frac{p(U)}{K+1} \;-\; \|g - p^\Pi\|_{\mathrm{TV}}.
	\]
\end{proposition}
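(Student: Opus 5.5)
The plan is to go through the coarsened distribution \(p^\Pi\). First, by the definition of total variation distance, taking \(S = U\) gives \(g(U) \ge p^\Pi(U) - \|g - p^\Pi\|_{\mathrm{TV}}\). It therefore suffices to prove the purely combinatorial inequality
\[
p^\Pi(U) \;\ge\; \frac{p(U)}{K+1},
\]
which holds for every partition \(\Pi\) and every corpus \(X\) drawn from \(p\). This step involves no randomness and no posterior; only \(K\)-Sparsity is used.

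To prove it, I decompose both sides over the cells of \(\Pi\). By \cref{def:coarsening}, \(p^\Pi(U) = \sum_{B \in \Pi} p(B)\,\frac{|B\cap U|}{|B|}\), while \(p(U) = \sum_{B\in\Pi} p(B\cap U)\). It is then enough to show the cellwise bound
\[
p(B)\,\frac{|B\cap U|}{|B|} \;\ge\; \frac{p(B\cap U)}{K+1}
\]
for every cell \(B\). If \(p(B\cap U) = 0\), the bound is trivial. Otherwise \(B \cap U \neq \emptyset\), so \(|B\cap U| \ge 1\). Moreover \(p(B) \ge p(B\cap U)\), so it remains to show \(|B| \le (K+1)\,|B\cap U|\).

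The key observation for this last inequality is that every observed statement is a fact. Since \(X \sim p^{\times n}\), each \(x_i\) lies in \(\supp(p) = F\), so \(O \subseteq F\). By \(K\)-Sparsity this gives \(|B \cap O| \le |O| \le K\). Hence
\[
|B| = |B\cap O| + |B\cap U| \le K + |B\cap U| \le (K+1)\,|B\cap U|,
\]
where the last step uses \(|B\cap U| \ge 1\). Summing the cellwise bounds over all cells and combining with the first step yields the proposition.

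The main obstacle is identifying the cellwise inequality. Coarsening could in principle pull mass from \(U\) into observed statements within the same cell. That loss is controlled only because each cell meeting \(U\) with positive mass contains at most \(K\) observed statements, which is exactly where sparsity, together with \(O \subseteq F\), enters. Care is also needed for cells with \(p(B\cap U)=0\), which contribute nothing to the right-hand side and so are handled trivially.
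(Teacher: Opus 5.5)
Your proposal is correct and follows essentially the same route as the paper: the total-variation step with \(S = U\) reduces the claim to \(p^\Pi(U) \ge p(U)/(K+1)\), which the paper also proves cell by cell using \(|O \cap B| \le |O| \le |F| \le K\) and \(|U \cap B| \ge 1\). The only difference is cosmetic: you split on whether \(p(B \cap U) = 0\), while the paper splits on whether \(U \cap B = \emptyset\).
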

\Cref{thm:gU-missingmass} is the link we need: note also that the proposition
applies to any partition \(\Pi\), even if \(\Pi\) is chosen depending upon
\(X\), and the (unknown) true document distribution \(p\).  Combining it with
\cref{thm:markov-bound,thm:sharp-bound}, we get the following corollaries:

\begin{corollary}[A missing mass bound from the Markov-style bound]
	\label{cor:markov-missing}
	Under the assumptions of \cref{thm:markov-bound}, for any
        \(\delta\in(K/|U|,1)\), with probability at least \(1-\delta\) conditioned on the
        corpus \(X\),
	\begin{align*}
		g(H) & \ge \frac{p(U)}{K+1} - \frac{1}{\delta|U|} - \|g - p^\Pi\|_{\mathrm{TV}},
	\end{align*}
        where \(\Pi\) is any partition of \(\Omega\), possibly dependent upon both
        \(X\) and the (unknown) true document distribution \(p\).  For the
        special case
        \(\Pi=\mathcal B_g\) (level sets of \(g\)), this guarantee becomes
	\[
		\Pr\!\left[
			g(H) \ge\frac{p(U)}{K+1} - \frac{1}{\delta|U|} - \mathrm{Mis}(g,p)
			\,\Bigm|\, X
			\right]
		\;\ge\;
		1 - \delta.
	\]
\end{corollary}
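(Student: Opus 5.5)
The plan is to chain \cref{thm:markov-bound} with \cref{thm:gU-missingmass}. On the event of \cref{thm:markov-bound}, which has conditional probability at least \(1-\delta\) given \(X\), we have
\[
g(H) \;\ge\; g(U)\Bigl(1 - \tfrac{K}{\delta|U|}\Bigr) \;=\; g(U) - \tfrac{K\,g(U)}{\delta|U|}.
\]
I would first try to bound the subtracted term using \(g(U)\le 1\), which gives \(g(H) \ge g(U) - K/(\delta|U|)\). However, the target error term is \(1/(\delta|U|)\), not \(K/(\delta|U|)\), so this is lossy. The \(K\) must be absorbed by the factor \(1/(K+1)\) that later multiplies \(p(U)\).

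To do that, I will substitute \(g(U) \ge p(U)/(K+1) - \|g-p^\Pi\|_{\mathrm{TV}}\) from \cref{thm:gU-missingmass}. The map \(x \mapsto x(1 - K/(\delta|U|))\) is nondecreasing because \(\delta > K/|U|\) makes the factor positive. Hence
\[
g(H) \ge \Bigl(\tfrac{p(U)}{K+1} - \|g-p^\Pi\|_{\mathrm{TV}}\Bigr)\Bigl(1 - \tfrac{K}{\delta|U|}\Bigr),
\]
and if the bracket is negative the claim is trivial since \(g(H)\ge 0\). Expanding, the cross term involving \(p(U)\) is
\[
\frac{p(U)}{K+1}\cdot\frac{K}{\delta|U|} \;\le\; \frac{K}{(K+1)\delta|U|} \;\le\; \frac{1}{\delta|U|},
\]
using \(p(U)\le 1\). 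The term \(\|g-p^\Pi\|_{\mathrm{TV}}\) is multiplied by a factor in \((0,1]\), so it is at most \(\|g-p^\Pi\|_{\mathrm{TV}}\). This yields the stated inequality on the same event, and hence with the same probability.

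Two points need care. The first is that \cref{thm:gU-missingmass} holds pointwise for every \(p\) in the support and every partition \(\Pi\), even one depending on \(p\) and \(X\). It is therefore a deterministic inequality that can be intersected with the probabilistic event of \cref{thm:markov-bound} at no cost in probability. The second is the monotonicity and sign bookkeeping that lets the \(K/(K+1)\) cancellation go through, which is the step I expect to be the main obstacle. The special case follows by taking \(\Pi = \mathcal{B}_g\), since then \(\|g - p^{\mathcal{B}_g}\|_{\mathrm{TV}} = \Mis(g,p)\) by \cref{def:miscalibration}.
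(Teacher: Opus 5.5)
Your proposal is correct and is essentially the paper's proof. Like the paper, you substitute the bound of \cref{thm:gU-missingmass} into the event of \cref{thm:markov-bound}, use that \(1 - K/(\delta|U|)\) is positive because \(\delta > K/|U|\), expand, and absorb the cross term via \(p(U)\le 1\) and \(K/(K+1)\le 1\); your case split on the sign of the bracket is harmless but unnecessary, since multiplying by a factor in \((0,1]\) preserves the inequality either way.
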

\begin{corollary}[A missing mass bound from the high-confidence bound]
	\label{cor:sharp-missing}
	Under the assumptions of Theorem~\ref{thm:sharp-bound}, with probability at
	least \(1 - \dfrac{K}{|U|}\) conditioned on the corpus \(X\),
	\begin{align*}
		g(H)
		 & \;\ge\;
\frac{p(U)}{(K+1)^2} - \frac{\|g - p^\Pi\|_{\mathrm{TV}}}{K+1},
	\end{align*}
        where \(\Pi\) is any partition of \(\Omega\), possibly dependent upon both
        \(X\) and the (unknown) true document distribution \(p\).  For the
        special case
        \(\Pi=\mathcal B_g\) (level sets of \(g\)), this guarantee becomes
	\[
		\Pr\!\left[
			g(H) \ge \frac{p(U)}{(K+1)^2} - \frac{\mathrm{Mis}(g,p)}{K+1}
			\,\Bigm|\, X
			\right]
		\;\ge\;
		1 - \frac{K}{|U|}.
	\]
\end{corollary}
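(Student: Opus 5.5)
The plan is to combine \cref{thm:sharp-bound} with \cref{thm:gU-missingmass} directly; no new probabilistic argument is needed. The only random ingredient is the event from \cref{thm:sharp-bound}; the link between innovation rate and missing mass in \cref{thm:gU-missingmass} is a deterministic inequality that holds pointwise for every \world{} \(p\) and corpus \(X\). We therefore work on the event
\[
  \mathcal{E} := \Bigl\{ g(H) \ge \tfrac{g(U)}{K+1} \Bigr\},
\]
which by \cref{thm:sharp-bound} has conditional probability at least \(1 - K/|U|\) given \(X\). We then show that on \(\mathcal{E}\) the claimed inequality holds for every partition \(\Pi\), including partitions chosen depending on \(X\) and on \(p\) itself.

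The pointwise step goes as follows. Fix any \(p\) in the posterior support and any partition \(\Pi\), possibly depending on \(p\) and \(X\). \Cref{thm:gU-missingmass} gives
\[
  g(U) \ge \frac{p(U)}{K+1} - \|g - p^\Pi\|_{\mathrm{TV}}.
\]
We need this to hold for every such pair, not merely on average, so that we may substitute it inside the event \(\mathcal{E}\) without further loss of probability. That requirement is met because the proposition is stated deterministically, with only \(K\)-Sparsity used.

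On \(\mathcal{E}\), dividing the displayed inequality by \(K+1\) (a positive constant) yields
\[
  g(H) \ge \frac{g(U)}{K+1} \ge \frac{p(U)}{(K+1)^2} - \frac{\|g-p^\Pi\|_{\mathrm{TV}}}{K+1}.
\]
The event \(\mathcal{E}\) does not depend on \(\Pi\). Hence its probability bound covers all \(\Pi\) simultaneously, so the freedom to choose \(\Pi\) after seeing \(p\) costs nothing.

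For the special case we take \(\Pi = \mathcal{B}_g\), the level-set partition of \(g\). By \cref{def:miscalibration}, \(\|g - p^{\mathcal{B}_g}\|_{\mathrm{TV}} = \Mis(g,p)\), which gives the second displayed statement with the same probability \(1 - K/|U|\).

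The one point needing care is the measurability and uniformity remark: the event is defined through \(g(H)\) and \(g(U)\) alone, while \(\Pi\) enters only through a deterministic bound. No union bound over partitions is needed, and I expect no real obstacle beyond stating this clearly.
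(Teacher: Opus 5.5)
Your proposal is correct and follows essentially the same route as the paper's proof: on the event $g(H) \ge g(U)/(K+1)$ from \cref{thm:sharp-bound}, you substitute the deterministic bound of \cref{thm:gU-missingmass}, then specialize to $\Pi=\mathcal{B}_g$ to obtain the bound in terms of $\Mis(g,p)$. Your added observation is correct and useful: the event does not depend on $\Pi$, so $\Pi$ may depend on $X$ and $p$ at no cost in probability, and the paper leaves this justification implicit.
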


\mparagraph{Comparison with the Kalai-Vempala Bounds} These corollaries
complement the Kalai-Vempala bounds in a precise sense.  First,
\cref{cor:sharp-missing} yields a missing mass lower bound on the hallucination
rate in a regime for the error probability \(\delta\) where both the Kalai-Vempala
bounds and the Markov-style bounds collapse: In particular, the Kalai-Vempala
bounds (in both
\cref{cor:kv-cor1,cor:kv-cor2}) become vacuous for \(\delta < 2K/|U|\), while
\cref{cor:sharp-missing} continues to provide a nontrivial guarantee with error
probability at most \(\delta = K/|U|\).

Second, \Cref{cor:markov-missing} gives a missing mass lower bound in the high
error probability regime under \(K\)-Sparsity and Regular Facts alone.  Compared
to the corresponding bound obtained by Kalai and Vempala (\cref{cor:kv-cor2}),
this bound is weaker by a factor of \(1/(K+1)\) in its dependence on the missing
mass.  However, compared to the Kalai-Vempala bounds, it has the advantage of
having no dependence on the corpus size \(n\). Consequently, it ensures a
hallucination rate on the order of \(p(U)/(K+1)\) even when \(n\) is comparable
to \(|U|/K\), a regime in which the Kalai-Vempala bound under \(K\)-Sparsity and
Regular Facts (Corollary \ref{cor:kv-cor2}) alone is vacuous.
Thus, compared to the Kalai-Vempala bound that additionally assumes Regular
Probabilities (Corollary \ref{cor:kv-cor1}) in order to remove the dependence on
corpus size, our \cref{cor:markov-missing} can be viewed as a qualitative
recovery of their conclusion under strictly weaker assumptions. While the
quantitative dependence on the missing mass is looser by a factor of
\(1/(K+1)\), the interpretation remains the same: any calibrated language model
must hallucinate at a rate controlled by the missing mass.

 \section{Empirical explorations} \label{sec:experiments}

The Kalai-Vempala framework and our results are aimed at model trainers with
access to training data. However, rarely is the training data available for
commercial LLMs. For an empirical exploration of our results, we therefore
consider a controlled toy setting where we have full access to both the model
and the training data. Towards this, we begin by replicating the experimental
setup of \citet{miaoHallucinationMonofactsMiscalibration2025} based on
\(n\)-gram models, that was designed to study the results of
\citet{kalaivempala2024caliberatedllms}. In this simple experiment, we find, as
expected, that whenever a model innovates, it also hallucinates and
therefore, innovation rate exactly matches hallucination rate. We defer the
details of this baseline experiment to \cref{sec:tuples-experiment}. We then
move to a more realistic setting, which adds to the \(n\)-gram model of
\citet{miaoHallucinationMonofactsMiscalibration2025} a rudimentary notion of
semantic truth. A core assumption of the Kalai-Vempala framework we use is thus
violated, and the experiment is thus also a test of how robust our conclusions
are to such perturbations.
 
\mparagraph{Data} We use a publicly available
dataset\footnote{Available from the UC Irvine Machine Learning Repository at
  \url{https://archive.ics.uci.edu/dataset/331/sentiment+labelled+sentences}
  under a CC-BY 4.0 license.} of approximately 3000 customer reviews
\citep{Kotzias2015FromGT}, retaining only the review text, removing all
punctuation and capitalization, and restricting to reviews of at most 20 words.
After preprocessing, the dataset contains 2350 unique reviews, with a vocabulary
of 3722 unique words.

\mparagraph{Methodology} We train $n$-gram models for $n \in \{2, 3, \ldots, 7\}$ using the \texttt{nltk} library and generate multiple outputs for each model.
The \emph{innovation rate} is the fraction of generated statements not found in
the training data. We evaluate the hallucination rate using five judges:
\begin{enumerate}
    \item \textbf{Human:} We manually label approximately 900 generated
          statements by asking ourselves if the generated statement  could be a review (excluding duplicates and statements already in the training data).
    \item \textbf{LLM judges:} We query four foundation models via \texttt{OpenRouter} using the
          prompt: ``{Is the following text a review? Respond with a 1 if it is, or with a 0
              if it isn't. $\langle$BEGIN TEXT$\rangle$ \{text of the generated
              statement\} $\langle$END TEXT$\rangle$}''
\end{enumerate}

\begin{figure}[t!]
    \centering
    \includegraphics[width=0.48\textwidth]{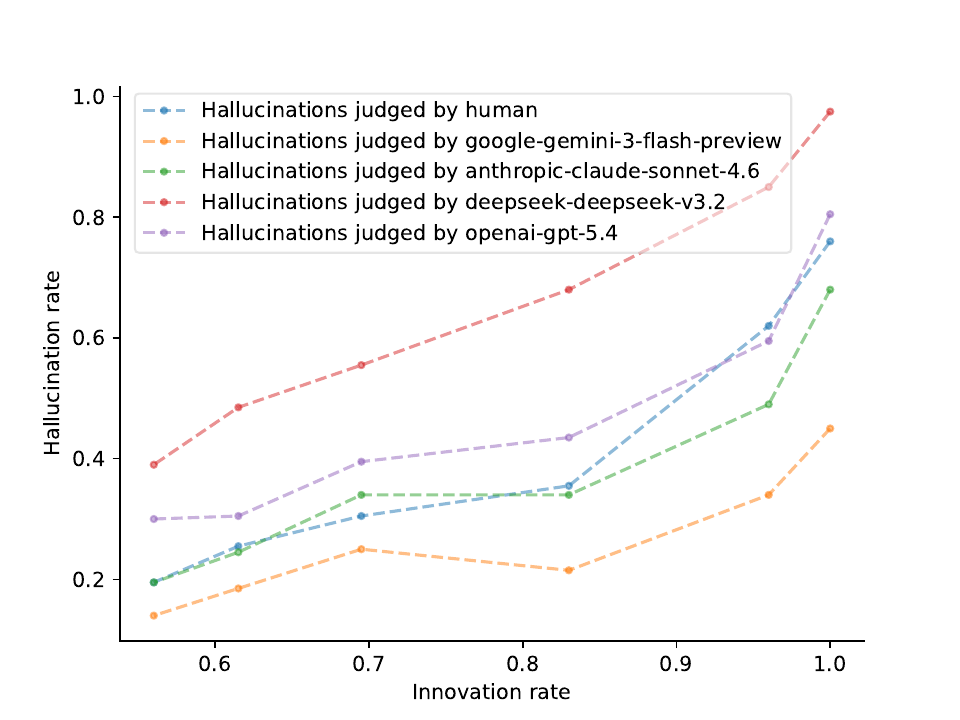}
    \caption{Hallucination Rate versus Innovation Rate. Each point denotes the innovation rate and a hallucination rate assessed by a judge (denoted by the colour) for a particular $n$-gram model. 
}
    \label{fig:innov-vs-hallu}
\end{figure}

\mparagraph{Results} Figure~\ref{fig:innov-vs-hallu} plots hallucination rate against
innovation rate for each judge, with each point corresponding to a different
$n$-gram order. We find that for each judge, innovation rate closely tracks hallucination rate across all the $n$-gram models. This suggests that innovation rate, which is easy to estimate for a model trainer, is a plausible indicator of the hallucination rate of language models. Further, given that the assumptions of our theoretical results are likely violated in this setting, the empirical results suggest that the relationship between innovation and hallucination is robust to deviations from these assumptions.  \cref{appendix: experiments} contains additional data and plots for this experiment.

 \section{Discussion}
\label{sec:discussion}

\mparagraph{Estimating the Hallucination Rate of LLMs} A natural application of quantities such as calibration and innovation, together with their connection to hallucination established by \citet{kalaivempala2024caliberatedllms} and extended in this work, is the estimation of hallucination rates in LLMs.\footnote{The Kalai--Vempala framework is formulated in terms of missing mass and miscalibration, both of which require indirect (proxy-based) estimation. In particular, missing mass can be estimated via the Good--Turing estimator \citep{goodPopulationFrequenciesSpecies1953}, as noted by \citet{kalaivempala2024caliberatedllms}, while miscalibration can be estimated using methods proposed by \citet{miaoHallucinationMonofactsMiscalibration2025}. In contrast, the innovation rate admits a direct empirical estimator.} Our empirical results indicate that innovation rate closely tracks hallucination rate. However, it remains unclear whether this relationship persists more broadly, particularly for modern LLMs.
The robustness of the relationship between innovation and hallucination to
deviations from the assumptions of our theoretical results, as observed in our
empirical results, thus invites further empirical and theoretical investigations.

\mparagraph{Feedback and Hallucination}  One way to interpret our results is that
hallucination can be seen as a consequence of innovation in the absence of
semantic feedback.  In practical settings, where model outputs can be evaluated
and models can be retrained or adjusted based on this feedback, such feedback
mechanisms may allow hallucination to be mitigated even while innovation
persists.  

We include further miscellaneous comments in \cref{sec-misc-comments}.

\section{Conclusion}

The outstanding question in the theoretical modeling of hallucination is to design theoretical frameworks that can capture as many aspects of the phenomenon
as possible.  One way to test existing frameworks, such as the framework of
\citet{kalaivempala2024caliberatedllms} studied in the paper, is to probe the
framework by asking what are the qualitatively strongest conclusions that are
implied by it.  Equivalently, the question is to identify the qualitatively
\emph{weakest} conditions under which the framework implies hallucination.  Our
work contributes to this line of work by identifying our notion of innovation as
the weakest possible condition that implies hallucination in the Kalai-Vempala
framework.

\section*{Acknowledgements}
We thank the anonymous reviewers for several helpful comments and suggestions.
We gratefully acknowledge support from the Department of Atomic Energy,
Government of India [project numbers RTI4001 and RTI4014]; by the
Infosys-Chandrasekharan virtual center for Random Geometry at the Tata Institute
of Fundamental Research; by the Science and Engineering Research Board [grant
number MATRICS MTR/2023/001547]; and by gifts to TIFR from Adobe Systems
Incorporated and through a Google India Research Award. The contents of this
paper do not necessarily reflect the views of the funding agencies listed above.
Large language models were used for proofreading, for feedback on readability,
and for help in programming the empirical explorations. We especially
acknowledge the pre-submission ``LLM feedback'' provided by ICML using the
Google Paper Assistant Tool.

\section*{Impact Statement}
The goal of this paper is to advance our understanding of high-level stochastic
modeling of the properties of language models.  Beyond this goal, we do not
believe there are any potential societal consequences of this work that need to
be highlighted here.

\newpage
\appendix
 \onecolumn
\section{Relaxing the Regular Facts Assumption}
\label{app:r-regular-facts}

A central assumption in the Kalai--Vempala framework is \emph{Regular Facts}
(\cref{assum-regular-facts}), under which, conditioned on the observed
corpus, all unobserved statements are equally likely to be factual.
Kalai and Vempala~\citeyearpar{kalaivempala2024caliberatedllms} introduce a natural
relaxation of this assumption, allowing limited non-uniformity.
This relaxation is technically straightforward and we describe its
 effect on our bounds in this appendix. We show that the only effect of relaxing Regular Facts to \(r\)-Regular Facts is a degradation of constants and failure probabilities proportionate to the relaxation.

\begin{assumption}[\detailcite{Definition 3: \(r\)-Regular Facts}{kalaivempala2024caliberatedllms}]
	\label{assum:r-regular-facts}
	We say that a meta-distribution \(\dworld{}\) satisfies \emph{\(r\)-Regular
	Facts} if for every corpus \(X\) and all unobserved statements
	\(y \in U := \Omega \setminus \set(X)\),
	\[
		\Pr_{p \sim \dworld{}}[y \in F \mid X]
		\;\le\;
		r \cdot \frac{\mathbb{E}_{p \sim \dworld{}}[|F \cap U| \mid X]}{|U|}.
	\]
\end{assumption}

The case \(r = 1\) recovers Regular Facts
(\cref{assum-regular-facts}). Intuitively,
\(r\)-Regular Facts allows posterior factual probabilities of unseen statements to vary by at most a multiplicative factor of \(r\) from their mean.

\subsection{Markov-Style Bound under \(r\)-Regular Facts}

We now state the extension of the Markov-style hallucination bound.

\begin{theorem}[Markov-style bound under \(r\)-Regular Facts]
	\label{thm:markov-r-regular}
	Assume \(K\)-Sparsity and \(r\)-Regular Facts (Assumptions \ref{assum-k-sparse} and \ref{assum:r-regular-facts}). Let \(g = \mathcal{A}(X)\) be the
	predictive distribution of a language model. Then for any
	\(\delta \in (rK/|U|, 1)\),
	\[
		\Pr_{p \sim \dworld{}}\Bigl[
			g(H) \;\ge\;
			g(U)\Bigl(1 - \frac{rK}{\delta |U|}\Bigr)
			\;\Bigm|\; X
		\Bigr]
		\;\ge\;
		1 - \delta.
	\]
\end{theorem}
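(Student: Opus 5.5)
The plan is to follow the proof of \cref{thm:markov-bound} verbatim. The only change is in the step that bounds the posterior probability that an unseen statement is factual. Under \(r\)-Regular Facts we no longer have a common value \(q\). Instead we have a uniform upper bound. For every \(y \in U\), \cref{assum:r-regular-facts} gives
\[
  \Pr[y \in F \mid X] \;\le\; r\cdot \frac{\mathbb{E}[|F\cap U| \mid X]}{|U|} \;\le\; \frac{rK}{|U|},
\]
where the second inequality uses \(K\)-Sparsity: \(|F \cap U| \le |F| \le K\) pointwise, so the conditional expectation is at most \(K\).

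Next I bound the expected hallucination rate. As before, \(U\) and each \(g(y)\) are determined by \(X\), so linearity of conditional expectation gives
\[
  \mathbb{E}[g(F\cap U)\mid X] = \sum_{y\in U} g(y)\Pr[y\in F\mid X] \le g(U)\,\frac{rK}{|U|}.
\]
Since \(g(H) = g(U) - g(F\cap U)\) (using \(H \supseteq U \setminus F\) and \(H\cap O = \emptyset\), because observed statements are facts), this yields
\[
  \mathbb{E}[g(H)\mid X] \ge g(U)\Bigl(1 - \frac{rK}{|U|}\Bigr).
\]

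The Markov-style step is unchanged, since it used only \(0 \le g(H) \le g(U)\). Set \(\alpha := \Pr[g(H) \ge t\,g(U) \mid X]\). Splitting the expectation on this event gives \(\mathbb{E}[g(H)\mid X] \le g(U)\,(t + (1-t)\alpha)\). Combining the two bounds gives \(\alpha \ge (1 - rK/|U| - t)/(1-t)\), provided \(g(U) > 0\). If \(g(U) = 0\), the event holds trivially and \(\alpha = 1\). Now choose \(t = 1 - rK/(\delta|U|)\). The hypothesis \(\delta \in (rK/|U|, 1)\) guarantees \(t \in (0,1)\). Substituting, \(1 - t = rK/(\delta|U|)\), so the numerator becomes \(rK/(\delta|U|) - rK/|U|\), and hence \(\alpha \ge 1 - \delta\).

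I expect no real obstacle. The one point needing care is that \(r\)-Regular Facts only bounds each probability relative to the posterior mean, not uniformly. The argument therefore has to pass through \(\mathbb{E}[|F\cap U|\mid X] \le K\), and it should not reuse the equality \(\sum_y \Pr[y\in F\mid X] = |U|q\) from the \(r=1\) case. The degenerate case \(g(U)=0\) should also be noted separately.
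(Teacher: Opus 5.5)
Your proposal is correct and follows the paper's proof essentially step for step. You bound $\Pr[y \in F \mid X] \le rK/|U|$ via $r$-Regular Facts together with $\mathbb{E}[|F\cap U| \mid X] \le K$, deduce $\mathbb{E}[g(H)\mid X] \ge g(U)(1 - rK/|U|)$, and rerun the Markov-style argument with $t = 1 - rK/(\delta|U|)$; your separate handling of the degenerate case $g(U)=0$, where one cannot divide by $g(U)$, is a small point the paper leaves implicit.
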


\begin{proof}
	As in the proof of Theorem~\ref{thm:markov-bound}, we begin by bounding the
	expected factual mass assigned to unseen statements. By linearity of
	expectation,
	\[
		\mathbb{E}[g(F \cap U) \mid X]
		=
		\sum_{y \in U} g(y)\Pr[y \in F \mid X].
	\]
	By \(r\)-Regular Facts and \(K\)-Sparsity,
	\[
		\Pr[y \in F \mid X]
		\;\le\;
		r \cdot \frac{\mathbb{E}[|F \cap U| \mid X]}{|U|} \;\le\; r \cdot \frac{K}{|U|}.
	\]
	Substituting yields 
	\[
		\mathbb{E}[g(F \cap U) \mid X]
		\;\le\;
		r \cdot \frac{K}{|U|} \sum_{y \in U} g(y)
		=
		r \cdot g(U)\frac{K}{|U|}.
	\]
	Since \(g(H) = g(U) - g(F \cap U)\), it follows that
	\[
		\mathbb{E}[g(H) \mid X]
		\;\ge\;
		g(U)\Bigl(1 - \frac{rK}{|U|}\Bigr).
	\]
	The rest of the proof proceeds exactly as in
        Theorem~\ref{thm:markov-bound} (with the parameter \(t\) in that proof
        now chosen to be \(1 - \frac{rK}{\delta\abs{U}}\)), yielding the stated
        probability bound.
\end{proof}

\subsection{High-Confidence Bound under \(r\)-Regular Facts}

We next describe the effect of \(r\)-Regular Facts on the high-confidence bound.

\begin{theorem}[High-confidence bound under \(r\)-Regular Facts]
	\label{thm:sharp-r-regular}
	Assume \(K\)-Sparsity and \(r\)-Regular Facts (Assumptions \ref{assum-k-sparse} and \ref{assum:r-regular-facts}). Let \(g = \mathcal{A}(X)\) be the predictive distribution of a language model. Then
	\[
		\Pr_{p \sim \dworld{}}\Bigl[
			g(H) \;\ge\; \frac{g(U)}{K+1}
			\;\Bigm|\; X
		\Bigr]
		\;\ge\;
		1 - \frac{rK}{|U|}.
	\]
\end{theorem}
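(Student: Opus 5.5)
The plan is to mirror the proof of \cref{thm:sharp-bound} exactly. The only place Regular Facts entered that argument was the single-statement estimate \(\Pr[y^\star \in H \mid X] \ge 1 - K/|U|\), so I will replace that step with its \(r\)-Regular analogue and leave the rest of the argument unchanged.

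\textbf{Step 1: the single-statement estimate.} For any \(y \in U\), \(r\)-Regular Facts gives
\(\Pr[y \in F \mid X] \le r\,\mathbb{E}[|F \cap U| \mid X]/|U|\). By \(K\)-Sparsity, \(|F \cap U| \le |F| \le K\) for every \world{} in the support. Hence \(\mathbb{E}[|F\cap U| \mid X] \le K\), and so \(\Pr[y \in F \mid X] \le rK/|U|\). This is the same chain of inequalities used in the proof of \cref{thm:markov-r-regular}. It follows that \(\Pr[y \in H \mid X] \ge 1 - rK/|U|\).

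\textbf{Step 2: the case split.} Set \(m := \max_{y\in U} g(y)\) and let \(y^\star\) be a maximizer. Both are determined by \(X\), as is \(g(U)\).

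In Case 1, \(m \le g(U)/(K+1)\). Sparsity gives \(g(F\cap U) \le Km \le K g(U)/(K+1)\), so \(g(H) = g(U) - g(F\cap U) \ge g(U)/(K+1)\) deterministically. No regularity assumption is used here, so this case carries over verbatim.

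In Case 2, \(m > g(U)/(K+1)\). The event \(y^\star \in H\) implies \(g(H) \ge g(y^\star) > g(U)/(K+1)\).

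\textbf{Step 3: combining the cases.} Let \(\mathcal{E}\) be the event \(g(H) \ge g(U)/(K+1)\). Since the case indicators are \(X\)-measurable and sum to one,
\(\Pr[\mathcal{E}\mid X] \ge I_{\mathcal{E}_1} + I_{\mathcal{E}_2}\Pr[y^\star\in H\mid X] \ge 1 - rK/|U|\).

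There is no real obstacle. The only point needing care is Step 1: \(r\)-Regular Facts bounds the posterior factual probability relative to the posterior mean, not by \(K/|U|\) directly, so the bound \(\mathbb{E}[|F\cap U|\mid X]\le K\) from sparsity must be inserted. Once that is done, the factor \(r\) appears only in the failure probability, and the rate \(g(U)/(K+1)\) is unaffected.
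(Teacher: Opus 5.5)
Your proposal is correct and matches the paper's proof. The paper likewise reuses the two-case argument of \cref{thm:sharp-bound} unchanged. Its only modification is replacing $\Pr[y^\star \in F \mid X] \le K/|U|$ with $\Pr[y^\star \in F \mid X] \le rK/|U|$, which it obtains, as in your Step 1, by combining $r$-Regular Facts with the sparsity bound $\mathbb{E}[|F \cap U| \mid X] \le K$.
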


\begin{proof}
  The proof follows that of Theorem~\ref{thm:sharp-bound}, except that the bound
  \( \Pr[y^\star \in F \mid X] \le K/|U| \) is replaced under \(r\)-Regular Facts by
  \( \Pr[y^\star \in F \mid X] \le rK/|U|\).  The remainder of the argument is unchanged.
\end{proof}

\subsection{Consequences for Missing Mass Bounds}

Since the relationship between innovation rate and missing mass
(\cref{thm:gU-missingmass}) does not rely on Regular Facts, all
missing mass corollaries in \cref{sec:rates} extend immediately to
\(r\)-Regular Facts by combining Theorems~\ref{thm:markov-r-regular} and
\ref{thm:sharp-r-regular} with the same translation arguments. The effect of
\(r\)-Regular Facts is limited to a multiplicative degradation of constants or,
in the high-confidence case, a degradation of the confidence level.

 \section{Proofs Omitted from the Main Paper}
\begin{proof}[Proof of \cref{prop:hall-to-inn}]
  By definition, \(H = \Omega \setminus F\) and \(F \subseteq \Omega\). Since
  \(O \subseteq F\) (as every statement in the training data is factual), every hallucination is
  unobserved, i.e. \(H \subseteq U\), so that \( g(U) \ge g(H)\). Thus,
  \(g(H) > 0\) implies \(g(U) > 0\).
\end{proof}

\begin{proof}[Proof of \cref{prop:calib-innov}]
	By calibration, there exists a partition $\Pi$ of $\Omega$ such that $g = p^\Pi$.
Now suppose that $p(U)>0$. Then there exists some $y^\star \in U$ with
        $p(y^\star)>0$.  Let $B^\star$ be the cell of $\Pi$ containing
        $y^\star$. Since $p(y^\star)>0$, we have $p(B^\star) \ge p(y^\star)>0$, and therefore
        \( g(y^\star) = \frac{p(B^\star)}{|B^\star|} > 0.  \) Because
        $y^\star \in U$, this implies \( g(U) \;\ge\; g(y^\star) \;>\; 0.\)
\end{proof}

\subsection{Proof of \cref{thm:gU-missingmass}}

To prove \cref{thm:gU-missingmass}, we begin with
a simple but key observation that lower-bounds the amount of probability mass
that any coarsening can remove from the set \(U\).

\begin{lemma}[Coarsening preserves a $1/(K+1)$-fraction of the missing mass]
	\label{lem:coarsening-missing-mass}
	Let \(p \in \Delta(\Omega)\) such that \(|\supp(p)| \le K\). Let  \(X \sim p^{\times n}\), and \(O := \set(X)\), \(U := \Omega \setminus O\).
	Let \(\Pi\) be a partition of \(\Omega\) and let \(p^\Pi\) denote the \(\Pi\)-coarsening
	of \(p\) as defined in Definition~\ref{def:coarsening}. Then, we have
	\[
		p^\Pi(U) \;\ge\; \frac{p(U)}{K+1}.
	\]
\end{lemma}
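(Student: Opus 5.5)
Since $p^\Pi(U)=\sum_{B\in\Pi} p(B)\,|B\cap U|/|B|$, the plan is to bound this sum cell by cell, keeping only the cells that carry unobserved facts. Note that $O\subseteq \supp(p)$, because every sample lies in the support, so $|O|\le K$. Also, $p(U)=\sum_{y\in U\cap\supp(p)}p(y)$.

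The key claim is the following: for every cell $B$ with $p(B\cap U)>0$,
\[
\frac{|B\cap U|}{|B|}\,p(B)\;\ge\;\frac{p(B\cap U)}{K+1}.
\]
Given this claim, summing over all cells gives $p^\Pi(U)\ge \sum_B p(B\cap U)/(K+1)=p(U)/(K+1)$. Cells with $p(B\cap U)=0$ contribute nonnegatively, so they can be dropped.

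To prove the claim, fix such a cell $B$. It contains at least one unobserved point, so $|B\cap U|\ge 1$. The observed points of $B$ satisfy $|B\cap O|\le |O|\le K$. Writing $|B|=|B\cap U|+|B\cap O|$, we get
\[
\frac{|B\cap U|}{|B|}\;\ge\;\frac{|B\cap U|}{|B\cap U|+K}\;\ge\;\frac{1}{1+K},
\]
because $a\mapsto a/(a+K)$ is increasing and $a\ge1$. Since $p(B)\ge p(B\cap U)$, the claim follows.

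I expect no serious obstacle. The only subtle points are the following.
\begin{itemize}
\item The bound must be per cell, with $|B\cap O|\le K$ coming from $O\subseteq\supp(p)$ and $K$-sparsity. We cannot use a bound on $|B|$ itself, since $|B|$ may be huge due to many unobserved non-facts. That does no harm, because such points are in $U$ and push the ratio $|B\cap U|/|B|$ up rather than down.
\item The randomness of $X$ plays no role: the inequality holds deterministically for every realization of $X$.
\end{itemize}

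Once the lemma is established, \cref{thm:gU-missingmass} should follow from $g(U)\ge p^\Pi(U)-\|g-p^\Pi\|_{\mathrm{TV}}$.
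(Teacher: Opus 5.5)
Your proposal is correct and follows the paper's proof essentially step for step. Both arguments use the per-cell inequality $p^\Pi(U\cap B)\ge p(B\cap U)\,|U\cap B|/(|U\cap B|+|O\cap B|)\ge p(B\cap U)/(K+1)$, with $|O\cap B|\le|O|\le K$ and $|U\cap B|\ge 1$, and then sum over the cells; the only difference is that you drop cells with $p(B\cap U)=0$ where the paper drops cells with $U\cap B=\emptyset$, which is immaterial.
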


\begin{proof}[Proof of \cref{lem:coarsening-missing-mass}]
  Fix a partition $\Pi$ of $\Omega$.  First, let us show that the inequality holds for
  all the cells in the partition \(\Pi\). Fix any \(B \in \Pi\). We want to show the
  following
	\[
		p^\Pi(U \cap B) \;\ge\; \frac{p(U \cap B)}{K+1},
		\label{eq:cellwise}
	\]
	If \(U \cap B = \emptyset\), then \(p^\Pi(U \cap B) = p(U \cap B) = 0\) and the above
        inequality holds trivially. Hence, we only have to argue for the case
        \(U \cap B \ne \emptyset\). We argue as follows.
	\begin{align*}
		p^\Pi(U \cap B) & = |U \cap B| \cdot \frac{p(B)}{|U \cap B|+|O \cap B|} \quad          & \tag{1} \\
		                & \ge p(U \cap B) \cdot \frac{|U \cap B|}{|U \cap B|+|O \cap B|} \quad & \tag{2} \\
		                & \ge p(U \cap B) \cdot \frac{|U \cap B|}{|U \cap B|+ K} \quad         & \tag{3} \\
		                & = p(U \cap B) \cdot \frac{1}{1+ \frac{K}{|U \cap B|}}                          \\
		                & \ge p(U \cap B) \cdot \frac{1}{1+K}                                  & \tag{4}
	\end{align*}
	where, (1) follows from the coarsening definition, (2) follows from \(U \cap B \subseteq B \), (3) uses \(|O \cap B| \le |O| \le |F| \le K\), and (4) follows from \(U \cap B \ne \emptyset \implies |U \cap B| \ge 1 \implies \frac{K}{|U \cap B|} \le K\).

	Hence, for all the cells of the partition, the inequality holds. Summing over all the cells finishes the proof.
\end{proof}
Using this lemma, we prove \cref{thm:gU-missingmass} which relates innovation rate and missing mass. 

\begin{proof}[Proof of \cref{thm:gU-missingmass}]
	Total variation satisfies
	\[
		\|g - p^\Pi\|_{\mathrm{TV}}
		\ge p^\Pi(U) - g(U).
	\]
	Rearranging gives \(g(U) \ge p^\Pi(U) - \|g - p^\Pi\|_{\mathrm{TV}}\).
	Applying \cref{lem:coarsening-missing-mass} completes the proof.
\end{proof}

\subsection{Proofs of \cref{cor:markov-missing,cor:sharp-missing}}

\begin{proof}[Proof of \cref{cor:markov-missing}]
  \Cref{thm:markov-bound} implies that with probability at least \(1-\delta\)
  conditioned on the corpus \(X\), it holds that
  \begin{equation}
    \label{eq:2}
    g(H) \geq g(U)\inp*{1 - \frac{K}{\delta\abs{U}}}.
  \end{equation}
  Substituting the bound
  \(g(U) \geq \frac{p(U)}{K + 1} - \|g - p^\Pi\|_{\mathrm{TV}}\) from
  \cref{thm:gU-missingmass} in \cref{eq:2} gives
  \begin{align}
    g(H) & \ge \Bigl(\frac{p(U)}{K+1} - \|g - p^\Pi\|_{\mathrm{TV}}\Bigr) \Bigl(1 - \frac{K}{\delta |U|}\Bigr) \\
         & \ge \frac{p(U)}{K+1} - \frac{K}{(K+1)\delta|U|} - \|g - p^\Pi\|_{\mathrm{TV}}           \label{eq:3}            \\
         & \ge
           \frac{p(U)}{K+1} - \frac{1}{\delta|U|} - \|g - p^\Pi\|_{\mathrm{TV}},
  \end{align}
  where \cref{eq:3} uses the conditions \(p(U) \leq 1\) and \(\delta > K/\abs{U}\).
\end{proof}

\begin{proof}[Proof of \cref{cor:sharp-missing}]
  \Cref{thm:sharp-bound} implies that with probability at least \(1-K/\abs{U}\)
  conditioned on the corpus \(X\), it holds that
  \begin{equation}
    \label{eq:4}
    g(H) \geq \frac{g(U)}{K+1}.
  \end{equation}
  Substituting the bound
  \(g(U) \geq \frac{p(U)}{K + 1} - \|g - p^\Pi\|_{\mathrm{TV}}\) from
  \cref{thm:gU-missingmass} in \cref{eq:4} gives the claim.
\end{proof}

 \section{Miscellaneous Discussion}
\label{sec-misc-comments}

\mparagraph{Hallucination in Augmented Models} One way to model augmented generation (e.g., RAG) in the framework of this work is to enlarge the effective observed set available to the model at inference time: one can treat the model together with the retrieval mechanism as a single combined system, and the union of the training data and the data available for retrieval as the effective observed corpus. This view provides one possible explanation for why such methods may fail to eliminate hallucination in practice: even with access to a large external corpus, the combined system may continue to innovate, i.e. assign probability mass beyond the effective observed corpus, and hence continue to hallucinate.

\mparagraph{Extension to Large Training Corpus} A potential method towards extending the notion of innovation to settings where the training corpus is large is to define a distance structure on the space \(\Omega\) of statements, and to define innovation in terms of the model producing statements that are ``far'' from the statements observed during training. Such a distance structure would additionally induce a notion of semantic similarity between statements, allowing the notion of innovation to capture semantic novelty. In a practical setting, such a distance structure could arise from viewing \(\Omega\) as a set of vector embeddings in Euclidean space. We consider a concrete version of this idea in our empirical explorations through the semantic innovation rate, described in \cref{appendix: experiments}.
 \section{Additional Information for \cref{sec:experiments}} \label{appendix: experiments}

In addition to the innovation rate, we also consider a \emph{semantic innovation rate} which counts a statement as an innovation only if its cosine similarity to every training statement, as
computed by the \texttt{all-MiniLM-L6-v2} sentence transformer, falls below
$0.95$. The motivation for considering such a semantic innovation rate is twofold. First, it is scalable to settings where the training data is very large and hence, it is not feasible to check if a generated statement is present in the training data. Second, it is possible that a generated statement is not present verbatim in the training data but is semantically very close to a statement in the training data. Considering such generated statements as innovation is against the spirit of the innovation definition. We find that the semantic innovation rate also closely tracks hallucination rate across all the judges and $n$-gram models, as shown in \cref{fig:plot-errors}. Further, \cref{fig:plot-errors} also plots error bars obtained using the 95\%-confidence Clopper-Pearson interval.

\begin{figure}
    \centering
    \includegraphics[width=0.6\textwidth]{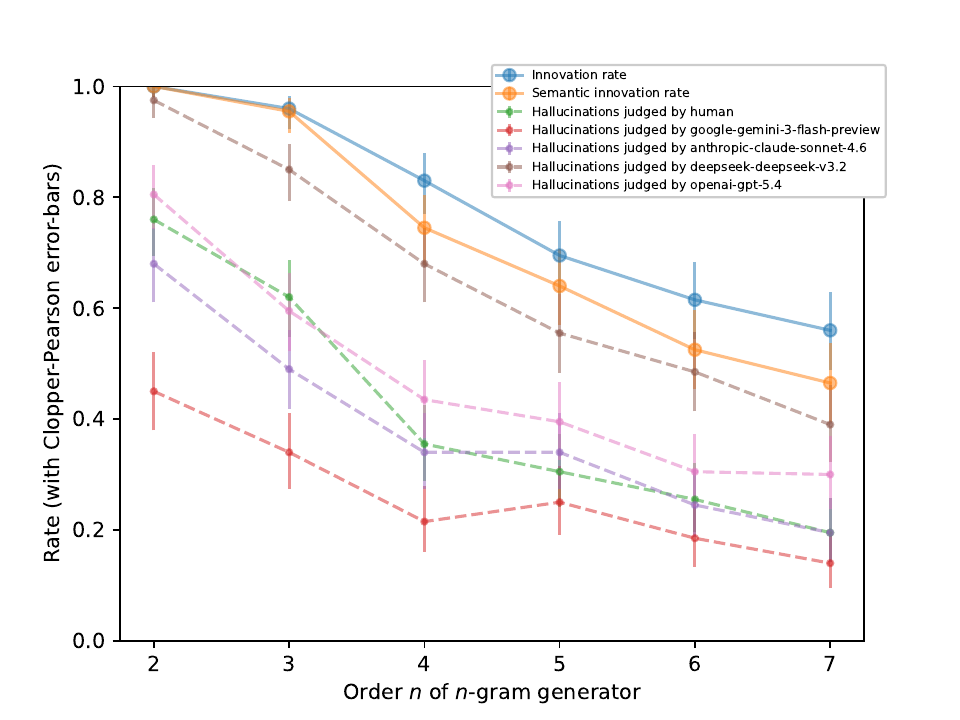}
    \caption{The two solid lines correspond to the observed innovation and
        semantic innovation rates, while the dashed lines plot the hallucination
        rates as judged by a human or different foundation models.  The error bars
        are obtained using the 95\%-confidence Clopper-Pearson interval. The
        points on each curve correspond to different $n$-gram models, for
        \(n \in \{2,3, \dots, 7\}.\)}
    \label{fig:plot-errors}
\end{figure}

\subsection{Replicating the Experiment of \citet{miaoHallucinationMonofactsMiscalibration2025}}
\label{sec:tuples-experiment}
\citet{miaoHallucinationMonofactsMiscalibration2025} designed an experimental
setup to study the Kalai-Vempala results which we replicate to validate our
results. First, a dataset of 10000 7-tuples of the form \emph{(Name, Date of
  Birth, Birthplace, Degree, College/University, Job, Employer)} is created and
then, a training corpus is created by sampling with replacement from this
dataset.\footnote{We consider the dataset used by Miao and Kearns which is
  available at \url{https://github.com/mmiao2/Hallucination}.} Each tuple in the
dataset is considered a ``fact''. We train $n$-gram models for
$n \in \{2, 3, 4, 5\}$ on the training corpus using the \texttt{ntlk} library and
generate multiple outputs for each \(n\)-gram model. A generated 7-tuple is
labeled an innovation if it is absent in the training corpus and a hallucination
if it does not appear in the full dataset (from which the training corpus was
sampled).

Given the lack of structure in the data, it seems plausible that whenever a model trained on such a training corpus innovates, i.e., the model produces a tuple outside the training corpus, it would likely hallucinate. Our experiments confirm that this is indeed the case and hence, in this simple setting, the innovation rate and the hallucination rate are exactly equal. We also observe that \(n\)-gram models trained with \(n \geq 4\) rarely innovate. Thus, while the empirical observations for this setting support our results, this might have been expected beforehand.

\subsection{Code Availability}

The full codebase, together with the datasets used in our experiments, is
available at
\url{https://github.com/nishantpratimdas/innovation-empirical-explorations}.
Some of the models used in our experiments may eventually be deprecated, making
it impossible to reproduce their outputs. For such cases, we have included all
the generated outputs in the repository to ensure that the experimental results
remain as reproducible as possible. Furthermore, the codebase supports
reproducing the full experimental pipeline on alternative and future models with
only minimal changes.

\end{document}